\theoremstyle{plain}
\newtheorem{theorem}{Theorem}[section]
\newtheorem{lemma}[theorem]{Lemma}
\newtheorem{corollary}[theorem]{Corollary}
\theoremstyle{definition}
\newtheorem{definition}[theorem]{Definition}
\begin{document}

\twocolumn[
\icmltitle{Stable Autonomous Flow Matching}
\icmlsetsymbol{equal}{*}

\begin{icmlauthorlist}
\icmlauthor{Christopher Iliffe Sprague}{sci,kth}
\icmlauthor{Arne Elofsson}{sci}
\icmlauthor{Hossein Azizpour}{kth}
\end{icmlauthorlist}

\icmlaffiliation{sci}{Department of Biochemistry and Biophysics, Science for Life Laboratory, Stockholm University, Solna, Sweden}
\icmlaffiliation{kth}{Department of Robotics, Perception, and Learning, KTH Royal Institute of Technology, Stockholm Sweden}

\icmlcorrespondingauthor{Christopher Iliffe Sprague}{christopher.iliffe.sprague@gmail.com}


\vskip 0.3in
]



\printAffiliationsAndNotice{}  

\begin{abstract}
    In contexts where data samples represent a physically stable state, it is often assumed that the data points represent the local minima of an energy landscape.
    In control theory, it is well-known that energy can serve as an effective Lyapunov function.
    Despite this, connections between control theory and generative models in the literature are sparse, even though there are several machine learning applications with physically stable data points.
    In this paper, we focus on such data and a recent class of deep generative models called flow matching. We apply tools of stochastic stability for time-independent systems to flow matching models.
    In doing so, we characterize the space of flow matching models that are amenable to this treatment, 
    as well as draw connections to other control theory principles.
    We demonstrate our theoretical results on two examples.
\end{abstract}

\begin{figure}[ht!]
    \centering 
    ~~~~\begin{tabular}{c@{\hspace{1.5cm}}c@{\hspace{1.5cm}}c}
        $t=0.0$ & $t=1.0$ & $t=1.1$ \\
    \end{tabular}
    \raisebox{0.3\height}{\rotatebox{90}{\textbf{Stable-FM}}}
    \includegraphics[width=0.95\linewidth]{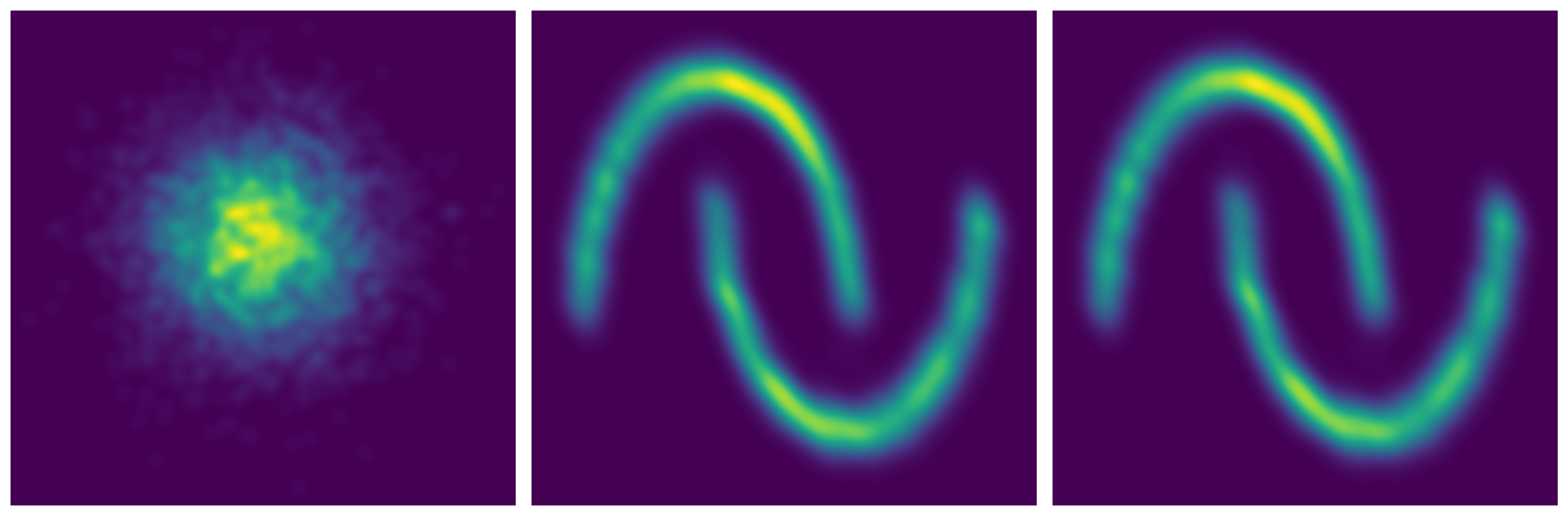}\\ 
    \raisebox{0.5\height}{\rotatebox{90}{\textbf{OT-FM}}} 
    \includegraphics[width=0.95\linewidth]{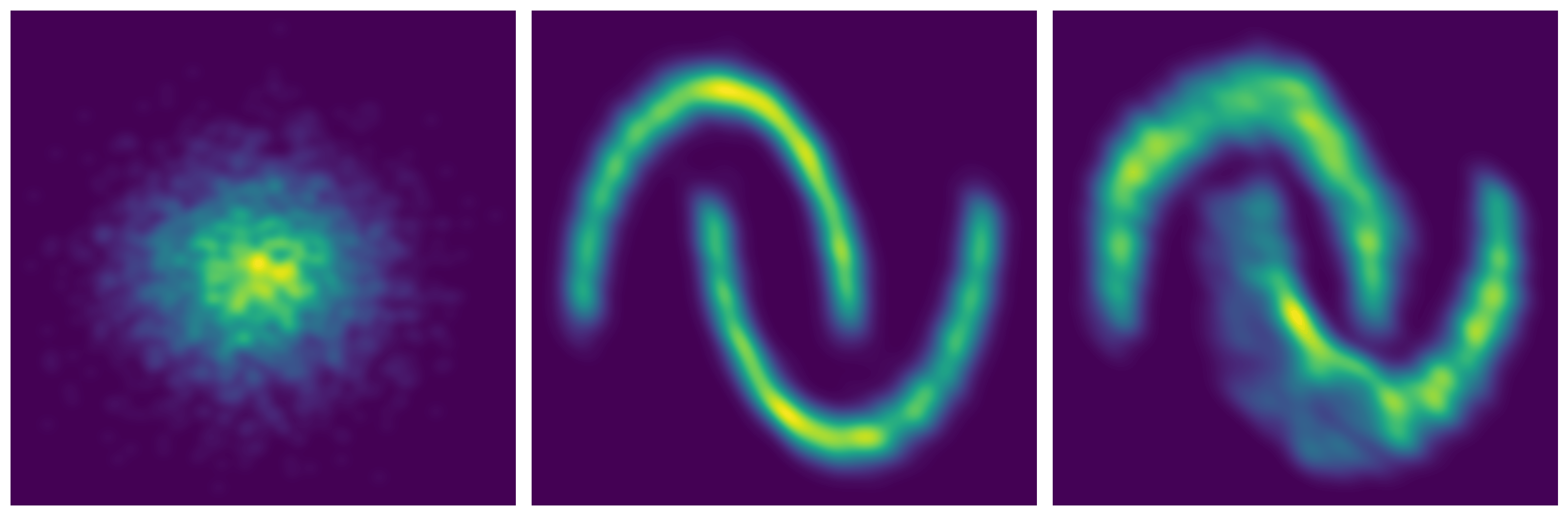}
    \caption{
        Flows of the Stable-FM model (top) and OT-FM model (bottom) from the standard normal distribution to the moons distribution.
        Note that the flow of OT-FM model does not stabilize to the distribution at $t=1$, while the Stable-FM model remains stable to the distribution as $t \to \infty$.
        A depiction with more time steps of the flow and corresponding vector field is shown in \cref{fig:moons_dist_long} and \cref{fig:moons-vecs}, respectively.
    }
    \label{fig:dist-front}
\end{figure}

\section{Introduction}

Generative modeling is a fundamental problem in machine learning, where the goal is usually to learn a model that can generate samples from a target distribution.
In recent years, deep generative models based on continuous-time dynamics \cite{Song2020ScoreBasedGM, Lipman2022FlowMF} have shown exceptional capabilities in a variety of tasks ranging from image generation \cite{Rombach2021HighResolutionIS} to complex structural biology applications \cite{Corso2022DiffDockDS, Ketata2023DiffDockPPRP}.

In contexts where samples from the target distribution represent a physically stable configuration, e.g. molecular conformations \cite{parr1979local}, it is natural to posit that incorporating their physical characteristics into the generative model will bolster performance.
Indeed, a variety of works have attempted to do so through force-field analogies \cite{Shi2021LearningGF, Feng2023MayTF, Luo2021PredictingMC, zaidi2022pre}, which rely on the idea that such samples represent local minima on a free-energy landscape \cite{dill1985theory}.

In control theory, it is well-known that an energy function can serve as an effective Lyapunov function and help to endow a dynamical system with stability \cite{Khalil2002NonlinearSS}.
Despite this, connections in the literature between control theory and generative models for physically stable data are sparse, even though there are existing works on applying Lyapunov stability to machine learning \cite{Kang2021StableNO, Rodriguez2022LyaNetAL, Zhang2022LearningRS, kolter2019learning, Zhang2022NeuralSC, lawrence2020almost}.

To bridge this gap, we apply a stochastic version of La Salle's invariance principle \cite{la1966invariance, MAO1999175} to flow matching (FM) models \cite{Lipman2022FlowMF}, a recent alternative to diffusion models based on continuous normalizing flows (CNFs) \cite{Chen2018NeuralOD}.
The main goal of our work is to equip the dynamics of the FM model with stability to the data samples to respect their intrinsic physical stability.

\paragraph{Contributions}
The main contribution of this work is that we apply the stochastic version of La Salle's invariance principle \cite{la1966invariance, MAO1999175} for \textit{time-independent} systems to FM \cite{Lipman2022FlowMF}. 
In doing so, we contribute the following:
\begin{enumerate}
    \item We show how to construct a time-independent marginal CNF (MCNF) probability density function (PDF) and vector field (VF) in \cref{thm:stationary-mcnf-pdf} and \cref{thm:time-invariant-mcnf-vf}, respectively, and we define the pairs of MCNFs and conditional CNFs (CCNFs) that enable this construction in \cref{def:autonomous-mcnf}.
    Additionally, we define a time-independent conditional FM (CFM) loss in \cref{def:autonomous-fm} and show its gradient's equivalence to the original CFM loss in \cref{thm:autonomous-fm}. We further simplify the loss function in \cref{def:unormalised-autonomous-fm}.
    \item We define the pairs of MCNFs and CCNFs that fulfill the invariance principle of \cref{thm:invariance-principle} in \cref{def:stable-mcnf-space} and \cref{def:stable-mcnf-scalar}. 
    Furthermore, we show that such CCNFs are a superset of the optimal transport CCNF (OT-CCNF) of \cite{Lipman2022FlowMF} under a bijection in \cref{cor:ot-fm}. 
    Additionally, we show that these CCNFs can be designed to converge to the target distribution within a certain time in \cref{cor:convergence-speed}. 
    Lastly, we show the time-independent MCNFs can be viewed as differential inclusion in \cref{cor:inclusion}.
    \item We demonstrate the theoretical results on two datasets, with comparisons to OT-FM.  
\end{enumerate}

\section{Related Work}

In recent years, continuous-time dynamics-based deep generative models \cite{Chen2018NeuralOD,Song2020ScoreBasedGM,Lipman2022FlowMF}
have come to the forefront of the field of deep generative modeling.

Notably, diffusion models \cite{Song2020ScoreBasedGM} were shown to be state-of-the-art in image generation tasks \cite{dhariwal2021diffusion}, and have since
garnered multiple applications ranging from 
structural biology tasks 
    \cite{Corso2022DiffDockDS, Yim2023SE3DM, Ketata2023DiffDockPPRP} to 
video generation 
    \cite{Ho2022VideoDM, Blattmann2023AlignYL, Esser2023StructureAC}, 
with several extensions such as 
latent representations 
    \cite{Vahdat2021ScorebasedGM,Blattmann2023AlignYL} and 
geometric priors 
        \cite{Bortoli2022RiemannianSG,Dockhorn2021ScoreBasedGM}.
Diffusion models are based on the SDEs (e.g. \cite{Srkk2019AppliedSD}) and score matching \cite{Hyvrinen2005EstimationON, Song2019SlicedSM},
where one learns a time-dependent vector-valued score function
$\nabla_\mathbf{x} \log(p(\mathbf{x}, t))$ over diffused data samples and then plugs it into the well-known time-reversed SDE \cite{Anderson1982ReversetimeDE,Haussmann1986TIMERO, Lindquist1979OnTS}.

FM models \cite{Lipman2022FlowMF} were proposed as an alternative to diffusion models \cite{Song2020ScoreBasedGM} that enjoy fast training and sampling, while maintaining competitive performance.
They rely on CNFs \cite{Chen2018NeuralOD} and can be seen as a generalization of diffusion models, as demonstrated by the existence of the probability flow ordinary differential equation (ODE) that induces the same marginal PDF as the SDE of diffusion models \cite{Maoutsa2020InteractingPS, Song2020ScoreBasedGM}.
Several applications of FM models have been made, ranging from structural biology \cite{Yim2023FastPB, Bose2023SE3StochasticFM} to media \cite{Le2023VoiceboxTM, Liu2023I2SBIS}, as well as fundamental extensions \cite{Tong2023ConditionalFM, Pooladian2023MultisampleFM, Shaul2023OnKO, Chen2023RiemannianFM, Klein2023EquivariantFM}.

In the context of data that describes physically stable states, e.g. molecular conformations, several works \cite{zaidi2022pre, Feng2023MayTF, Shi2021LearningGF, Luo2021PredictingMC} have learned force fields by leveraging the connection between the score function and Boltzmann distributions, i.e. $\nabla_\mathbf{x} \log(p(\mathbf{x}, t)) = -\nabla_\mathbf{x} H(\mathbf{x}, t)$ where $H$ is a scalar energy function.
In \cite{zaidi2022pre}, an equivalence between denoising score matching \cite{Vincent2011ACB} and force-field learning is pointed out.
This idea was extended in \cite{Feng2023MayTF} to incorporate off-equilibrium data and NN gradient fields.
In \cite{Shi2021LearningGF, Luo2021PredictingMC}, a NN gradient field is learned to model a psuedo-force field, which is then used to sample energy-minimizing molecular conformations via annealed Langevin dynamics \cite{Song2019GenerativeMB}.

Along the same lines, Poisson Flow Generative Models \cite{Xu2023PFGMUT, Xu2022PoissonFG}, use the solution of Poisson's equation to define a force field as the gradient of the Poisson potential.
Similar to our work, they augment the spatial state with an auxiliary state that acts as an interpolant and replaces time.
However, the gradient field of their spatial state is linked to that of their auxiliary variable.
In contrast, our work deals with quadratic potentials (see \cref{sec:stable-mcnfs}) that yield independent dynamics for the spatial and auxiliary states.
Furthermore, we can govern the rate of convergence using this auxiliary state (see \cref{cor:convergence-speed}).
Additionally, we eliminate the time variable in the context of the conditional FM (CFM) loss \cite{Lipman2022FlowMF}; see \cref{thm:autonomous-fm}.

In control theory, physical stability is often analyzed through Lyapunov stability \cite{Khalil2002NonlinearSS}.
As a result, there are numerous works on applying Lyapunov stability to ODEs/SDEs in the context of dynamics learning \cite{Kang2021StableNO, Rodriguez2022LyaNetAL, Zhang2022LearningRS, kolter2019learning, Zhang2022NeuralSC, lawrence2020almost}.
Despite this, virtually none of the aforementioned works consider stochastic stability to the samples of a target distribution.
In our work, we apply a stochastic invariance principle (\cref{thm:invariance-principle}) to construct stable MCNFs and CCNFs in the context of FM.

\section{Preliminaries}

In this work, we assume that there exists a time-independent PDF $q'(\mathbf{x})$ called a \textit{target PDF} that we do not have direct access to, but we can sample from via a dataset partially contained in its support $\mathcal{X}'$, called the \textit{target space}.
We assume that this support exists in an ambient space $\mathcal{X}$ called the \textit{state space}.

In \cref{sec:flow-matching} we will define CNFs on these spaces, and in \cref{sec:stability} we will define stability on these spaces.
Before continuing, we define some notation.

\paragraph{Notation}
Matrices are denoted in uppercase boldface, vectors in lowercase boldface, and scalars in lowercase non-boldface.
The \emph{state space} is denoted by $\mathcal{X} \subseteq \mathbb{R}^n$ with $n \in \mathbb{N}$, and the \emph{target space} by $\mathcal{X}' \subset \mathcal{X}$. 
The space of continuously differentiable functions from $\mathcal{X}$ to $\mathcal{Y}$ is denoted by $C(\mathcal{X}, \mathcal{Y})$.
The space of non-negative scalars is denoted as $\mathbb{R}_+$.
The space of time-independent PDFs on $\mathcal{X}$ is denoted as $\mathcal{P}(\mathcal{X}) := \{p \in C(\mathcal{X}) \mid \int_\mathcal{X} p(\mathbf{x}) \mathrm{d}\mathbf{x} = 1\}$.
The \textit{target PDF} is denoted as $q' \in \mathcal{P}(\mathcal{X}')$.
The space of time-dependent PDFs on $\mathcal{X}$ is denoted as $\mathcal{P}(\mathcal{X}, \mathbb{R}_+) := \{p \in C(\mathcal{X} \times \mathbb{R}_+) \mid \int_\mathcal{X} p(\mathbf{x}, t) \mathrm{d}\mathbf{x} = 1\}$.
The space of vectors on $\mathcal{X}$ is denoted as $T\mathcal{X}$.
The Dirac-delta PDF centered at $\mathbf{x}'$ is denoted as $\delta_{\mathbf{x}'} \in \mathcal{P}(\mathcal{X})$.
Function-valued functions are denoted s.t. if $f: \mathcal{X} \to C(\mathcal{X}, \mathcal{Y})$ then $f(\cdot \mid \mathbf{x}) \in C(\mathcal{X}, \mathcal{Y})$ and $f(\mathbf{x} \mid \mathbf{x}') \in \mathcal{Y}$.

\subsection{Flow Matching}\label{sec:flow-matching}

To motivate the proposed method, we first review the FM framework \cite{Lipman2022FlowMF}.
The framework relies on the concept of CNFs \cite{Chen2018NeuralOD}, where a simple PDF (e.g. the standard normal PDF) is transformed into a more complex PDF (i.e. the target PDF $q'$) via a VF.
Importantly, the VF must be such that the total PDF mass is conserved, i.e. at every point in time the integral of the PDF over the state space must equal one.
This requirement is achieved when the VF satisfies a continuity equation \cite{Villani2008OptimalTO}.

Formally speaking, a CNF consists of a VF $\mathbf{v}$, a flow map $\mathbf{\psi}$ (i.e. the integral curve of $\mathbf{v}$), and a time-dependent PDF $p$, such that: $(\mathbf{v}, \mathbf{\psi})$ define an ODE and $(\mathbf{v}, p)$ define a continuity equation.
We define the space of CNFs $\mathfrak{F}$ in \cref{def:cnf}.

\begin{definition}[CNF Space]\label{def:cnf}
    A subset
    \begin{equation}
        \mathfrak{F} :\subset C(\mathcal{X} \times \mathbb{R}_{+}, T\mathcal{X}) \times C(\mathcal{X} \times \mathbb{R}_{+}, \mathcal{X}) \times \mathcal{P}(\mathcal{X}, \mathbb{R}_{+})
    \end{equation}
    s.t. for all $(\mathbf{v}, \mathbf{\psi}, p) \in \mathfrak{F}$:
    \begin{align}
        \frac{\mathrm{d} \mathbf{\psi}(\mathbf{x}, t)}{\mathrm{d}t} &= \mathbf{v}(\mathbf{\psi}(\mathbf{x}, t)) \\
        \frac{\partial p(\mathbf{x}, t)}{\partial t} &= -\nabla_\mathbf{x} \cdot \left[\mathbf{v}(\mathbf{x}, t)p(\mathbf{x}, t)\right].
    \end{align}
    We say $(\mathbf{v}, \mathbf{\psi}, p) \in \mathfrak{F}$ is a CNF, $\mathbf{v} \in C(\mathcal{X} \times \mathbb{R}_{+}, T\mathcal{X})$ is a CNF VF, $\mathbf{\psi} \in C(\mathcal{X} \times \mathbb{R}_{+}, \mathcal{X})$ is a CNF flow map, and $p \in \mathcal{P}(\mathcal{X}, \mathbb{R}_{+})$ is a CNF PDF.
\end{definition}

In FM, we assume there exists a latent CNF that describes how a simple PDF (e.g. the standard normal PDF) is transformed into a more complex PDF (i.e. the target PDF $q'$), s.t. $p(\mathbf{x}, T) = q'(\mathbf{x})$ for some $T \in \mathbb{R}_+$.
We then want to train a NN VF $\mathbf{u}_\theta$ to match the latent CNF VF $\mathbf{v}$, via the FM loss in \cref{def:fm-loss}.

\begin{definition}[FM Loss]\label{def:fm-loss}
    A loss function
    \begin{equation}
        L_\text{FM}(\theta) := \underset{\substack{t \sim \mathcal{U}[0, T] \\ \mathbf{x} \sim p(\mathbf{x}, t)}}{\mathbb{E}} \lVert \mathbf{v}_\theta(\mathbf{x}, t) - \mathbf{v}(\mathbf{x}, t) \rVert^2,
    \end{equation}
    given a CNF $(\mathbf{v}, \mathbf{\psi}, p) \in \mathfrak{F}$, a NN VF $\mathbf{v}_\theta \in C(\mathcal{X} \times \mathbb{R}_{+}, T\mathcal{X})$, and a time $T \in \mathbb{R}_+$.
\end{definition}

When $L_\text{FM}(\theta) = 0$, we could then sample $\mathbf{x} \sim p(\mathbf{x}, 0)$ and transform it into $\mathbf{x} \sim q'(\mathbf{x})$ via $\mathbf{\psi}_\theta(\mathbf{x}, T)$, according to the push-forward equation:
\begin{equation}
    p(\mathbf{x}, t) = p(\mathbf{\psi}_\theta^{-1}(\mathbf{x}, t), 0) \det \left( \nabla_\mathbf{x} \mathbf{\psi}_\theta^{-1}(\mathbf{x}, t) \right),
\end{equation}
where $\mathbf{\psi}_\theta^{-1}$ is the inverse of $\psi_\theta$ w.r.t. $\mathbf{x}$, and $\nabla_\mathbf{x} \mathbf{\psi}_\theta^{-1}(\mathbf{x}, t)$ is its Jacobian matrix.

Of course, in all but the simplest cases, we do not have access to the latent CNF, thus $L_\text{FM}$ cannot be used directly.
To ameliorate this issue, FM relies on two key components: CCNFs and MCNFs.
A CCNF is a tractable CNF that resembles the latent CNF's behavior for each sample of the target PDF.
An MCNF is a CNF that is constructed from a mixture of CCNFs in order to match the latent CNF's behavior.
We define the space of CCNFs $\mathfrak{F}'$ and MCNF-CCNF pairs $\mathfrak{F}''$ in \cref{def:ccnf} and \cref{def:mcnf} respectively.  

\begin{definition}[CCNF Space]\label{def:ccnf}
    A subset
    \begin{equation}
        \mathfrak{F}' :\subset C(\mathcal{X}', \mathfrak{F})
    \end{equation}
    s.t. for all $(\mathbf{v}', \mathbf{\psi}', p') \in \mathfrak{F}'$ and $\mathbf{x}' \in \mathcal{X}'$: there exists $T \in \mathbb{R}_+$ s.t. $p'(\mathbf{x}, T \mid \mathbf{x}') \approx \delta_{\mathbf{x}'}(\mathbf{x})$.
    We say $(\mathbf{v}', \mathbf{\psi}', p') \in \mathfrak{F}'$ is a CCNF and 
    $(\mathbf{v}'(\cdot \mid \mathbf{x}'), \mathbf{\psi}'(\cdot \mid \mathbf{x}'), p'(\cdot \mid \mathbf{x}')) \in \mathfrak{F}$ is a CNF given $\mathbf{x}' \in \mathcal{X}'$.
\end{definition}

\begin{definition}[MCNF-CCNF Space]\label{def:mcnf}
    A subset
    \begin{equation}
        \mathfrak{F}'' :\subset \mathfrak{F} \times \mathfrak{F}'
    \end{equation}
    s.t. for all $((\mathbf{v}, \mathbf{\psi}, p), (\mathbf{v}', \mathbf{\psi}', p')) \in \mathfrak{F}''$:
    \begin{align}\label{eq:mcnf-pdf}
        &p(\mathbf{x}, t) := \int_{\mathcal{X}'} p'(\mathbf{x}, t \mid \mathbf{x}')q'(\mathbf{x}')\mathrm{d}\mathbf{x}'.
    \end{align}
    We say $(\mathbf{v}, \mathbf{\psi}, p) \in \mathfrak{F}$ is an MCNF given a CCNF 
    $(\mathbf{v}', \mathbf{\psi}', p') \in \mathfrak{F}'$. 
\end{definition}

The key insight of FM is that since both MCNFs and CCNFs obey the continuity equation in \cref{def:cnf}, the MCNF VF has a specific form, shown in \cref{lem:mvf}.

\begin{lemma}[MCNF VF]\label{lem:mvf}
    ~\\
    For all $((\mathbf{v}, \mathbf{\psi}, p), (\mathbf{v}', \mathbf{\psi}', p')) \in \mathfrak{F}''$:
    \begin{align}\label{eq:mcnf-vector-field}
        &\mathbf{v}(\mathbf{x}, t) = \int_{\mathcal{X}'} \frac{\mathbf{v}'(\mathbf{x}, t \mid \mathbf{x}')p'(\mathbf{x}, t \mid \mathbf{x}')q'(\mathbf{x}')}{p(\mathbf{x}, t)} \mathrm{d}\mathbf{x}',
    \end{align}
    where $p(\mathbf{x}, t)$ is defined in \cref{def:mcnf}.
\end{lemma}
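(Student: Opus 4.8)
The plan is to exploit the fact that both the MCNF and each CCNF satisfy the continuity equation of \cref{def:cnf}, and that the MCNF PDF is the mixture \eqref{eq:mcnf-pdf}. First I would differentiate \eqref{eq:mcnf-pdf} with respect to $t$, interchanging the time derivative with the integral over $\mathcal{X}'$ (justified by continuity of $p'$ and $q'$ together with a dominated-convergence argument, since $q' \in \mathcal{P}(\mathcal{X}')$ integrates to one and the CCNF PDFs are $C^1$), to obtain
\begin{equation}
    \frac{\partial p(\mathbf{x}, t)}{\partial t} = \int_{\mathcal{X}'} \frac{\partial p'(\mathbf{x}, t \mid \mathbf{x}')}{\partial t} q'(\mathbf{x}') \, \mathrm{d}\mathbf{x}'.
\end{equation}
Then I would substitute the CCNF continuity equation $\partial_t p'(\mathbf{x}, t \mid \mathbf{x}') = -\nabla_\mathbf{x} \cdot [\mathbf{v}'(\mathbf{x}, t \mid \mathbf{x}') p'(\mathbf{x}, t \mid \mathbf{x}')]$, which holds because $(\mathbf{v}'(\cdot \mid \mathbf{x}'), \mathbf{\psi}'(\cdot \mid \mathbf{x}'), p'(\cdot \mid \mathbf{x}')) \in \mathfrak{F}$ for every $\mathbf{x}' \in \mathcal{X}'$ by \cref{def:ccnf}, and pull the spatial divergence outside the $\mathbf{x}'$-integral.

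Next I would compare this with the MCNF continuity equation $\partial_t p(\mathbf{x}, t) = -\nabla_\mathbf{x} \cdot [\mathbf{v}(\mathbf{x}, t) p(\mathbf{x}, t)]$, which holds because $(\mathbf{v}, \mathbf{\psi}, p) \in \mathfrak{F}$ by \cref{def:mcnf}. Equating the two expressions for $\partial_t p$ gives
\begin{equation}
    \nabla_\mathbf{x} \cdot \left[ \mathbf{v}(\mathbf{x}, t) p(\mathbf{x}, t) \right] = \nabla_\mathbf{x} \cdot \int_{\mathcal{X}'} \mathbf{v}'(\mathbf{x}, t \mid \mathbf{x}') p'(\mathbf{x}, t \mid \mathbf{x}') q'(\mathbf{x}') \, \mathrm{d}\mathbf{x}'.
\end{equation}
Matching the flux terms themselves yields $\mathbf{v}(\mathbf{x}, t) p(\mathbf{x}, t) = \int_{\mathcal{X}'} \mathbf{v}'(\mathbf{x}, t \mid \mathbf{x}') p'(\mathbf{x}, t \mid \mathbf{x}') q'(\mathbf{x}') \, \mathrm{d}\mathbf{x}'$, and dividing by $p(\mathbf{x}, t)$ (on the support where $p(\mathbf{x}, t) > 0$) gives \eqref{eq:mcnf-vector-field}.

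The main obstacle is the last step: passing from equality of divergences to equality of the fluxes is not valid in general, since the two sides could differ by a divergence-free field. I expect to handle this exactly as in the FM literature — by reading the lemma as the characterization of the MCNF VF: one verifies directly that the candidate VF $\mathbf{v}(\mathbf{x},t) := \tfrac{1}{p(\mathbf{x},t)}\int_{\mathcal{X}'} \mathbf{v}'(\mathbf{x}, t \mid \mathbf{x}') p'(\mathbf{x}, t \mid \mathbf{x}') q'(\mathbf{x}') \, \mathrm{d}\mathbf{x}'$ together with the mixture PDF \eqref{eq:mcnf-pdf} satisfies the continuity equation, so the pair $(\mathbf{v},\mathbf{\psi},p)$ with $\mathbf{\psi}$ the flow of this $\mathbf{v}$ is the MCNF of \cref{def:mcnf}; the secondary obstacle is simply bookkeeping the regularity/decay hypotheses (vanishing boundary flux and integrability of $\mathbf{v}' p' q'$) needed to justify the interchange of derivative and integral and of divergence and integral, which I would state as standing assumptions on $\mathfrak{F}$ and $\mathfrak{F}'$.
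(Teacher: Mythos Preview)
Your proposal is correct and is precisely the argument of Theorem~1 in \cite{Lipman2022FlowMF}, which is all the paper invokes for this lemma. In particular, your resolution of the divergence-free ambiguity---treating \eqref{eq:mcnf-vector-field} as a \emph{definition} of the MCNF VF and then verifying that $(\mathbf{v},p)$ satisfies the continuity equation---is exactly how the cited theorem is stated and proved.
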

\begin{proof}
    See Theorem 1 of \cite{Lipman2022FlowMF}.
\end{proof}

The MCNF VF in \cref{lem:mvf} leads to the definition of the tractable CFM loss in \cref{def:cfm-loss}, which has identical gradients to the FM loss in \cref{def:fm-loss}, as shown in \cref{thm:cfm}.

\begin{figure*}[ht!]
    \centering
        ~~~~\begin{tabular}{c@{\hspace{1.2cm}}c@{\hspace{1.2cm}}c@{\hspace{1.2cm}}c@{\hspace{1.2cm}}c@{\hspace{1.2cm}}c@{\hspace{1.2cm}}c}
        $t=0.0$ & $t=0.25$ & $t=0.5$ & $t=0.75$ & $t=1.0$ & $t=1.25$ & $t=1.5$ \\
    \end{tabular}\break
    \raisebox{0.2\height}{\rotatebox{90}{\textbf{Stable-FM}}}
    \includegraphics[width=0.95\textwidth]{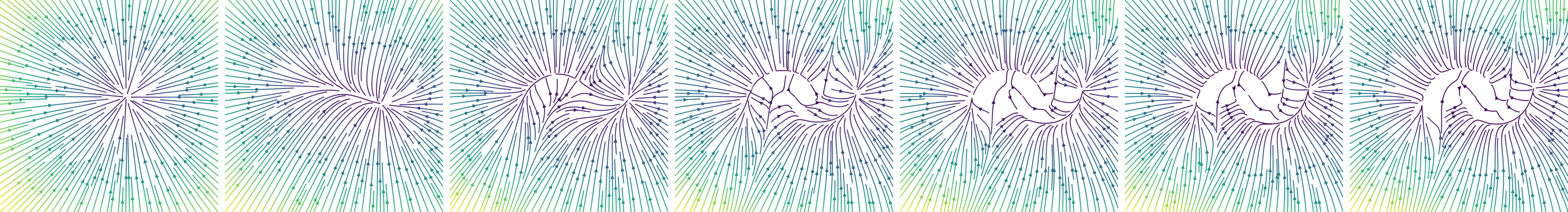} \\
    \raisebox{0.5\height}{\rotatebox{90}{\textbf{OT-FM}}}
    \includegraphics[width=0.95\textwidth]{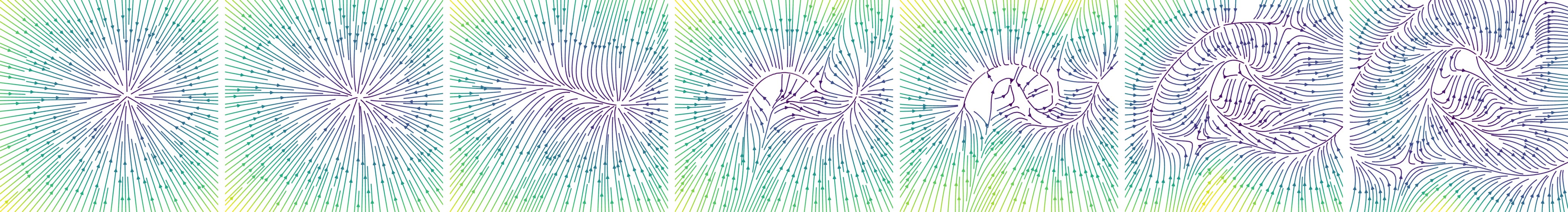}
    \caption{
        Stream plots of the VFs of Stable-FM (top) and OT-FM (bottom) corresponding to \cref{fig:dist-front}.
        Note that beyond $t = 1$, the OT-FM VF diverges, while the Stable-FM VF stabilizes.
        Light colors indicate larger vector magnitudes and v.v..
    }
    \label{fig:moons-vecs}
\end{figure*}

\begin{definition}[CFM Loss]\label{def:cfm-loss}
    A loss function
    \begin{equation}\label{eq:cfm-loss}
        L_{\text{CFM}}(\theta) := \underset{\substack{t \sim \mathcal{U}[0, T] \\ \mathbf{x}' \sim q'(\mathbf{x}')\\ \mathbf{x} \sim p'(\mathbf{x}, t \mid \mathbf{x}')}}{\mathbb{E}} \lVert \mathbf{v}_\theta(\mathbf{x}, t) - \mathbf{v}'(\mathbf{x}, t \mid \mathbf{x}') \rVert^2
    \end{equation}
    given a CCNF $(\mathbf{v}', \mathbf{\psi}', p') \in \mathfrak{F}'$, a NN VF $\mathbf{v}_\theta \in C(\mathcal{X} \times \mathbb{R}_{+}, T\mathcal{X})$, and a time $T \in \mathbb{R}_+$.
\end{definition}

\begin{theorem}[FM and CFM Gradients]\label{thm:cfm}
    For all $((\mathbf{v}, \mathbf{\psi}, p), (\mathbf{v}', \mathbf{\psi}', p')) \in \mathfrak{F}''$:
    \begin{equation}
        \nabla_\theta L_\text{FM}(\theta) = \nabla_\theta L_\text{CFM}(\theta).
    \end{equation}
\end{theorem}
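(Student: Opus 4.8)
The plan is to follow the standard argument of Theorem 2 of \cite{Lipman2022FlowMF}: expand each squared norm, discard the terms that do not depend on $\theta$, and reconcile the surviving terms using the marginalisation built into \cref{def:mcnf} together with the VF identity of \cref{lem:mvf}. First I would write, for the $L_\text{FM}$ integrand (arguments $(\mathbf{x}, t)$ suppressed),
\begin{equation}
    \lVert \mathbf{v}_\theta - \mathbf{v} \rVert^2 = \lVert \mathbf{v}_\theta \rVert^2 - 2 \langle \mathbf{v}_\theta, \mathbf{v} \rangle + \lVert \mathbf{v} \rVert^2,
\end{equation}
and analogously for the $L_\text{CFM}$ integrand with $\mathbf{v}$ replaced by $\mathbf{v}'(\cdot, \cdot \mid \mathbf{x}')$. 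Since $\mathbf{v}$ and $\mathbf{v}'$ do not depend on $\theta$, the $\lVert \mathbf{v} \rVert^2$ and $\lVert \mathbf{v}' \rVert^2$ terms contribute nothing under $\nabla_\theta$, so it is enough to match the expectations of $\lVert \mathbf{v}_\theta \rVert^2$ and of $\langle \mathbf{v}_\theta, \mathbf{v} \rangle$.

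For the quadratic term I would invoke \cref{eq:mcnf-pdf}: for fixed $t$, $\mathbb{E}_{\mathbf{x} \sim p(\cdot, t)} \lVert \mathbf{v}_\theta \rVert^2 = \int_{\mathcal{X}} \lVert \mathbf{v}_\theta \rVert^2 \big( \int_{\mathcal{X}'} p'(\mathbf{x}, t \mid \mathbf{x}') q'(\mathbf{x}') \, \mathrm{d}\mathbf{x}' \big) \mathrm{d}\mathbf{x}$, and Fubini rewrites this as $\mathbb{E}_{\mathbf{x}' \sim q',\, \mathbf{x} \sim p'(\cdot, t \mid \mathbf{x}')} \lVert \mathbf{v}_\theta \rVert^2$; averaging over $t \sim \mathcal{U}[0, T]$ then equates the quadratic contributions of the two losses. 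For the cross term I would multiply the identity of \cref{lem:mvf} through by $p(\mathbf{x}, t)$ to get $\mathbf{v}(\mathbf{x}, t) p(\mathbf{x}, t) = \int_{\mathcal{X}'} \mathbf{v}'(\mathbf{x}, t \mid \mathbf{x}') p'(\mathbf{x}, t \mid \mathbf{x}') q'(\mathbf{x}') \, \mathrm{d}\mathbf{x}'$, so that $\int_{\mathcal{X}} \langle \mathbf{v}_\theta, \mathbf{v} \rangle p \, \mathrm{d}\mathbf{x}$ becomes, after Fubini, $\mathbb{E}_{\mathbf{x}' \sim q',\, \mathbf{x} \sim p'(\cdot, t \mid \mathbf{x}')} \langle \mathbf{v}_\theta, \mathbf{v}'(\cdot, t \mid \mathbf{x}') \rangle$, which is exactly the cross term appearing in $L_\text{CFM}$. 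Assembling the two matched pieces and applying $\nabla_\theta$ to both sides yields $\nabla_\theta L_\text{FM}(\theta) = \nabla_\theta L_\text{CFM}(\theta)$.

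The algebra above is routine; the only genuine care is needed at the two places where I invoke Fubini's theorem and where I interchange $\nabla_\theta$ with the expectation. I would discharge these under mild regularity that is implicit in \cref{def:cnf} and \cref{def:mcnf} (for instance, $p, p'$ and $\mathbf{v}, \mathbf{v}'$ locally integrable, and $\mathbf{v}_\theta$ continuously differentiable in $\theta$ with the relevant second moments finite), and otherwise defer to \cite{Lipman2022FlowMF}. No new idea is required beyond that reference: the content of \cref{def:mcnf} is precisely what legitimises the change of measure from $p(\mathbf{x}, t)$ to $q'(\mathbf{x}') p'(\mathbf{x}, t \mid \mathbf{x}')$ that drives both matchings.
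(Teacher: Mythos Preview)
Your proposal is correct and follows exactly the approach the paper adopts: the paper's proof is simply a reference to Theorem~2 of \cite{Lipman2022FlowMF}, and you have faithfully reproduced that argument (expand the squares, drop the $\theta$-independent terms, and match the quadratic and cross terms via \cref{eq:mcnf-pdf} and \cref{lem:mvf} respectively). Nothing further is needed.
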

\begin{proof}
    See Theorem 2 of \cite{Lipman2022FlowMF}.
\end{proof}

When $L_\text{CFM}$ is minimized, and assuming $p'(\mathbf{x}, 0 \mid \mathbf{x}') = \mathcal{N}(\mathbf{0}, \mathbf{I})$ for all $\mathbf{x}' \in \mathcal{X}'$, we can then sample $\mathbf{x} \sim \mathcal{N}(\mathbf{0}, \mathbf{I})$ and transform it into $\mathbf{x} \sim q'(\mathbf{x})$ via $\mathbf{\psi}_\theta(\mathbf{x}, T)$, according to the push-forward equation.

\subsection{Stability}\label{sec:stability}

In this work, we want to learn a CNF s.t. the push-forward of the standard normal PDF not only converges to the target PDF $q'$, but also remains stable to its samples.

Stability is a well-studied topic in ODEs \cite{Khalil2002NonlinearSS}, where one typically seeks to characterize stability to a point $\mathbf{x}_\star \in \mathcal{X}$ s.t. $\lim_{t \to \infty} \mathbf{\psi}(\mathbf{x}, t) = \mathbf{x}_\star$ for all starting states $\mathbf{x} \in \mathcal{X}$.
To do this, one can apply Lyapunov's direct method, where if one can find a scalar function $H \in C(\mathcal{X}, \mathbb{R}_+)$ whose gradient along the VF resembles a funnel centered at $\mathbf{x}_\star$, then $\mathbf{x}_\star$ is an asymptotically stable point.

Similarly, one can also characterize stability to a set $\mathcal{X}_\star \subseteq \mathcal{X}$, 
s.t. $\lim_{t \to \infty} \mathbf{\psi}(\mathbf{x}, t) \in \mathcal{X}_\star$ for all $\mathbf{x} \in \mathcal{X}$.
In this case, one can apply La Salle's invariance principle \cite{la1966invariance}, which employs a Lyapunov-like function to locate invariant sets of a system.
Importantly, without further assumptions, this principle only applies to time-independent (autonomous) systems.

Stochastic notions of the aforementioned methods have also been developed in \cite{Kushner1965ONTS,Khasminskii1980StochasticSO,MAO1999175}.
The continuity equation of CNFs in \cref{def:cnf} can be interpreted as the Fokker-Planck-Kolmogorov (FPK) equation (see e.g. Section 5.2 in \cite{Srkk2019AppliedSD}) for an SDE without a diffusion coefficient, thus all of these notions are applicable.

The target distribution $q'$ may be multi-modal, thus we are interested in applying the stochastic version \cite{MAO1999175} of La Salle's invariance principle \cite{la1966invariance}.
We adapt the principle to this setting in \cref{thm:invariance-principle}.

\begin{theorem}[Invariance Principle]\label{thm:invariance-principle}
    If there exists a CNF $(\mathbf{v}, \mathbf{\psi}, p) \in \mathfrak{F}$ s.t. $\mathbf{v}$ is time-independent, and a scalar function $H \in C(\mathcal{X}, \mathbb{R}_+)$ s.t.
    \begin{align}
        &\mathcal{L}_\mathbf{v}H(\mathbf{x}) := \nabla_\mathbf{x} H(\mathbf{x}) \mathbf{v}(\mathbf{x}) \leq 0
    \end{align}
    then
    \begin{equation}
        \lim_{t \to \infty} \mathbf{\psi}(\mathbf{x}, t) \in \left\{\mathbf{x} \in \mathcal{X} \mid \mathcal{L}_\mathbf{v}H(\mathbf{x}) = 0\right\}
    \end{equation}
    almost surely with $\mathbf{x} \sim p(\mathbf{x}, 0)$.
\end{theorem}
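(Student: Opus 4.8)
The plan is to reduce this to the classical stochastic LaSalle invariance principle of \cite{MAO1999175} by recognizing the continuity equation in \cref{def:cnf} as the Fokker--Planck--Kolmogorov equation of a (degenerate) SDE with zero diffusion, as the excerpt itself suggests. The object $\mathcal{L}_\mathbf{v}H(\mathbf{x}) = \nabla_\mathbf{x}H(\mathbf{x})\mathbf{v}(\mathbf{x})$ is precisely the infinitesimal generator of the deterministic flow $\mathbf{\psi}$ applied to $H$; indeed, along any integral curve, $\tfrac{\mathrm d}{\mathrm dt}H(\mathbf{\psi}(\mathbf{x},t)) = \nabla_\mathbf{x}H(\mathbf{\psi}(\mathbf{x},t))\,\mathbf{v}(\mathbf{\psi}(\mathbf{x},t)) = \mathcal{L}_\mathbf{v}H(\mathbf{\psi}(\mathbf{x},t))$ by the chain rule and the ODE in \cref{def:cnf}. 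So $H$ is a time-independent Lyapunov function in the required sense, and the result is a stochastic-initial-condition wrapper around the deterministic invariance principle.

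\textbf{Step 1: Deterministic trajectory-wise argument.} Fix an initial state $\mathbf{x}$ in the support of $p(\cdot,0)$ and consider the trajectory $t \mapsto \mathbf{\psi}(\mathbf{x},t)$. By the chain rule and the ODE, $t \mapsto H(\mathbf{\psi}(\mathbf{x},t))$ is non-increasing because its derivative equals $\mathcal{L}_\mathbf{v}H(\mathbf{\psi}(\mathbf{x},t)) \leq 0$. Since $H$ is bounded below by $0$ (it maps into $\mathbb{R}_+$), $H(\mathbf{\psi}(\mathbf{x},t))$ converges to some limit $H_\infty(\mathbf{x}) \geq 0$ as $t \to \infty$. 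Consequently $\int_0^\infty (-\mathcal{L}_\mathbf{v}H)(\mathbf{\psi}(\mathbf{x},s))\,\mathrm ds = H(\mathbf{x}) - H_\infty(\mathbf{x}) < \infty$, so $-\mathcal{L}_\mathbf{v}H$ is integrable along the trajectory. Combined with continuity of $\mathcal{L}_\mathbf{v}H$ and (the standard Barbalat-type / LaSalle argument that the $\omega$-limit set is invariant), this forces $\mathcal{L}_\mathbf{v}H(\mathbf{\psi}(\mathbf{x},t)) \to 0$, hence every point of the $\omega$-limit set of the trajectory lies in $\{\mathbf{x} : \mathcal{L}_\mathbf{v}H(\mathbf{x}) = 0\}$; in the notation of the statement, $\lim_{t\to\infty}\mathbf{\psi}(\mathbf{x},t) \in \{\mathbf{x}\in\mathcal{X} \mid \mathcal{L}_\mathbf{v}H(\mathbf{x})=0\}$.

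\textbf{Step 2: Lift to the stochastic statement.} The only randomness is in the initial condition $\mathbf{x} \sim p(\cdot,0)$. Step 1 shows the desired inclusion holds for \emph{every} deterministic $\mathbf{x}$ in the (essential) support of $p(\cdot,0)$ — or at worst for all $\mathbf{x}$ outside a null set where $\mathbf{\psi}$ might fail to be defined globally — so it holds for $p(\cdot,0)$-almost every $\mathbf{x}$, which is exactly the ``almost surely with $\mathbf{x}\sim p(\mathbf{x},0)$'' claim. Alternatively, one invokes Theorem in \cite{MAO1999175} directly: with zero diffusion the diffusion generator reduces to the drift term $\mathcal{L}_\mathbf{v}$, the supermartingale $H(\mathbf{\psi}(\mathbf{x},t))$ degenerates to a deterministic non-increasing function, and the stochastic LaSalle conclusion specializes to the claim here, with the continuity equation in \cref{def:cnf} playing the role of the FPK equation that certifies $p(\cdot,t)$ is the law of $\mathbf{\psi}(\cdot,t)_\# p(\cdot,0)$.

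\textbf{The main obstacle} is not the Lyapunov estimate itself but the existence and global-in-time well-posedness of the flow $\mathbf{\psi}(\mathbf{x},\cdot)$ on $[0,\infty)$: the definition of $\mathfrak{F}$ only asks $\mathbf{v} \in C(\mathcal{X}\times\mathbb{R}_+, T\mathcal{X})$ and $\mathbf{\psi}$ to solve the ODE, without a priori ruling out finite-time blow-up or non-uniqueness. The natural fix uses the Lyapunov function itself: if additionally $H$ is radially unbounded (proper) — or if one restricts to a sublevel set $\{H \leq c\}$, which is forward-invariant since $H$ is non-increasing along trajectories — then trajectories starting in that set stay in a compact set, so solutions extend to all of $[0,\infty)$ and the $\omega$-limit set is nonempty and compact, legitimizing Step 1. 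I would flag this as a standing regularity assumption (compact sublevel sets of $H$, or global Lipschitz-type control on $\mathbf{v}$), note it is consistent with the quadratic $H$ used later in \cref{sec:stable-mcnfs}, and then the rest goes through verbatim from the classical references.
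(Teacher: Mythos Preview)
Your proposal is correct and follows essentially the same approach as the paper: the paper's entire proof is the one-line citation ``See Corollary 4.1 in \cite{MAO1999175},'' i.e.\ it directly invokes the stochastic LaSalle principle of Mao with zero diffusion, exactly as you do in the second half of Step~2. Your Step~1 deterministic Barbalat/LaSalle argument and the regularity discussion (radial unboundedness of $H$, forward invariance of sublevel sets) are additional detail that the paper omits but that are consistent with its later use of quadratic potentials in \cref{sec:stable-mcnfs}; they do not constitute a different route, just a more explicit unpacking of what the cited corollary relies on.
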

\begin{proof}
    See Corollary 4.1 in \cite{MAO1999175}.
\end{proof}

It is well known that finding a Lyapunov-like function $H$ for a given VF $\mathbf{v}$ is generally difficult.
However, if $\mathbf{v}$ is a gradient field of $H$, it is straightforward to show that the principle is satisfied, as shown in \cref{lem:gradient-flow}.

\begin{lemma}[Gradient Fields]\label{lem:gradient-flow}
    If there exists a CNF $(\mathbf{v}, \mathbf{\psi}, p) \in \mathfrak{F}$ and a scalar function $H \in C(\mathcal{X}, \mathbb{R}_+)$ s.t.
    \begin{equation}
        \mathbf{v}(\mathbf{x}) = -\nabla_\mathbf{x} H(\mathbf{x})^\top,
    \end{equation}
    then \cref{thm:invariance-principle} holds.
\end{lemma}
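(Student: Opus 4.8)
The plan is to verify the single hypothesis of \cref{thm:invariance-principle}, namely that $\mathcal{L}_\mathbf{v}H(\mathbf{x}) \leq 0$ for all $\mathbf{x} \in \mathcal{X}$, and then invoke that theorem directly. Since we are given $\mathbf{v}(\mathbf{x}) = -\nabla_\mathbf{x} H(\mathbf{x})^\top$, the VF is by construction time-independent, so that part of the hypothesis is immediate.

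The main computation is then to substitute the gradient-field form of $\mathbf{v}$ into the Lie-derivative expression. We have
\begin{equation}
    \mathcal{L}_\mathbf{v}H(\mathbf{x}) = \nabla_\mathbf{x} H(\mathbf{x}) \, \mathbf{v}(\mathbf{x}) = \nabla_\mathbf{x} H(\mathbf{x}) \left(-\nabla_\mathbf{x} H(\mathbf{x})^\top\right) = -\lVert \nabla_\mathbf{x} H(\mathbf{x}) \rVert^2 \leq 0,
\end{equation}
where the last step uses that $\nabla_\mathbf{x} H(\mathbf{x})$ is a row vector and its product with its own transpose is the squared Euclidean norm, which is nonnegative. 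This holds for every $\mathbf{x} \in \mathcal{X}$, so the inequality hypothesis of \cref{thm:invariance-principle} is satisfied with this choice of $H$ (which is already assumed to lie in $C(\mathcal{X}, \mathbb{R}_+)$).

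Having checked both conditions, the conclusion of \cref{thm:invariance-principle} applies verbatim: $\lim_{t \to \infty} \mathbf{\psi}(\mathbf{x}, t)$ lies in the set $\{\mathbf{x} \in \mathcal{X} \mid \mathcal{L}_\mathbf{v}H(\mathbf{x}) = 0\}$ almost surely with $\mathbf{x} \sim p(\mathbf{x}, 0)$. By the computation above, this set is exactly the set of critical points $\{\mathbf{x} \in \mathcal{X} \mid \nabla_\mathbf{x} H(\mathbf{x}) = \mathbf{0}\}$, which is a worthwhile remark to include even though it is not strictly required by the statement.

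Honestly, there is no real obstacle here — the lemma is essentially a one-line corollary of \cref{thm:invariance-principle}, and the only thing to be careful about is the transpose/row-vector bookkeeping so that $\nabla_\mathbf{x} H(\mathbf{x}) \mathbf{v}(\mathbf{x})$ is genuinely a scalar equal to $-\lVert \nabla_\mathbf{x} H(\mathbf{x}) \rVert^2$ rather than something shaped incorrectly. One should also note in passing that $\mathbf{v} = -\nabla_\mathbf{x} H^\top$ being continuous (so that $(\mathbf{v},\mathbf{\psi},p)$ is a legitimate element of $\mathfrak{F}$, as assumed in the hypothesis) requires $H \in C(\mathcal{X},\mathbb{R}_+)$ to be continuously differentiable, which is consistent with the paper's convention that $C(\cdot,\cdot)$ denotes continuously differentiable maps.
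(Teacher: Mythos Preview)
Your proposal is correct and follows exactly the same approach as the paper: the paper's proof is the single line $\mathcal{L}_\mathbf{v}H(\mathbf{x}) = -\nabla_\mathbf{x} H(\mathbf{x}) \nabla_\mathbf{x} H(\mathbf{x})^\top \leq 0$, which is precisely your main computation. Your additional remarks on time-independence, the critical-point characterization of the limit set, and the differentiability bookkeeping are all sound elaborations but not required.
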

\begin{proof}
    $\mathcal{L}_\mathbf{v}H(\mathbf{x}) = -\nabla_\mathbf{x} H(\mathbf{x}) \nabla_\mathbf{x} H(\mathbf{x})^\top \leq 0$.
\end{proof}

Aside from enforcing stability, gradient fields are also unique in the context of the continuity equation due to the well-known Helmholtz decomposition, see \cite{ambrosio2005gradient, neklyudov2022action, bhatia2012helmholtz, richter2022neural}, which states that a VF can be constructed by the sum of a curl-free VF (e.g. a gradient field) and divergence-free VF.
It is easy to see that adding a divergence-free component does not affect the behavior of the continuity equations in \cref{def:cnf}.

\section{Main Result}\label{sec:main-result}

\subsection{Autonomous MCNFs and CCNFs}\label{sec:autonomous-mcnfs}

Our goal is to construct a CNF  s.t. we can apply the invariance principle of \cref{thm:invariance-principle} to guarantee its stability to the support of the target distribution $q'$.
Unfortunately, even if the CCNF VF used to construct the MCNF VF is time-independent, the MCNF VF will not be time-independent due to its dependence on the CCNF's time-dependent PDF $p'(x, t \mid x')$ in \cref{lem:mvf}.

To ameliorate this, we consider that time in the context of MCNFs is merely used as a parameter of interpolation between the initial and target PDFs.
Thus, we can instead consider how the MCNF behaves w.r.t. to one of the variables in the state space.

To explore this line of reasoning, we augment the state space with a new scalar state $\tau$, which can be thought of as a pseudo-time or temperature.
We then consider that this state evolves deterministically over time via a Dirac-delta PDF, but is still involved as an argument to the VFs.
We define this new MCNF-CCNF space with a data space $\mathcal{Z}$ and a pseudo-time space $\mathcal{T}$ in \cref{def:autonomous-mcnf}

\begin{definition}[Auto MCNF-CCNF Space]\label{def:autonomous-mcnf}
    A subset 
    \begin{equation}
        \mathfrak{F}_\text{Auto}'' :\subset \mathfrak{F}'' \quad \text{s.t.} \quad 
        \begin{aligned}
            &\mathcal{X} := \mathcal{Z} \times \mathcal{T}, \quad \mathcal{T} \subset \mathbb{R} \\
            &q'(\mathbf{x}) := q_\mathbf{z}'(\mathbf{z})q_\tau'(\tau)
        \end{aligned}
    \end{equation}
    and for all $((\mathbf{v}, \mathbf{\psi}, p), (\mathbf{v}', \mathbf{\psi}', p')) \in \mathfrak{F}''_\text{Auto}$:
    \begin{align}
        \mathbf{v}'(\mathbf{x} \mid \mathbf{x}') =& \begin{bmatrix}
            \mathbf{v}_\mathbf{z}'(\mathbf{z} \mid \mathbf{z}') \\
            v_\tau'(\tau \mid \tau')
        \end{bmatrix} \\
        \mathbf{\psi}'(\mathbf{x}, t \mid \mathbf{x}') =& \begin{bmatrix}
            \mathbf{\psi}_\mathbf{z}'(\mathbf{z}, t \mid \mathbf{z}') \\
            \psi_\tau'(\tau, t \mid \tau')
        \end{bmatrix} \\
        \label{eq:autonomous-mcnf-pdf}
        p'(\mathbf{x}, t \mid \mathbf{x}') =& p'_\mathbf{z}(\mathbf{z}, t \mid \mathbf{z}') p_\tau'(\tau, t \mid \tau').
    \end{align}
    Furthermore, with $\tau_0 \in \mathcal{T}$ and $\tau_1 \in \mathcal{T}'$ s.t. $\tau_0 \neq \tau_1$:
    \begin{equation}\label{eq:autonomous-mcnf-pdfs}
        q_\tau'(\tau) = \delta_{\tau_1}(\tau), \quad p'_\tau(\tau, t \mid \tau') = \delta_{\psi_\tau'(\tau_0, t \mid \tau')}(\tau).
    \end{equation}
    We say $(\mathbf{v}, \mathbf{\psi}, p) \in \mathfrak{F}$ is an autonomous MCNF given an autonomous CCNF $(\mathbf{v}', \mathbf{\psi}', p') \in \mathfrak{F}'$.
\end{definition}

The key part of \cref{def:autonomous-mcnf} is that the target PDF $q'_\tau$ over $\mathcal{T}'$ is a dirac-delta PDF centered at $\tau_1$, and the CCNF's PDF $p'_\tau$ over $\mathcal{T}$ is dirac-delta PDF centered at $\psi_\tau'(\tau_0, t \mid \tau')$.
Thus, $\tau$ will evolve deterministically over time from $\tau_0$ to $\tau_1$.
Now, assuming that CCNF flow map $\psi_\tau'$ is bijective with time, we can then use its inverse to map $\tau$ to $t$ and render the MCNF autonomous, as we show in \cref{thm:stationary-mcnf-pdf} and \cref{thm:time-invariant-mcnf-vf}.
Henceforth, we use $\bar{\cdot}$ to denote the substitution of $t$ with $\mathbf{\psi}_\tau'^{-1}(\tau_0, \tau \mid \tau_1)$.

\begin{theorem}[Time-Independent MCNF PDF]\label{thm:stationary-mcnf-pdf}
    For all $((\mathbf{v}, \mathbf{\psi}, p), (\mathbf{v}', \mathbf{\psi}', p')) \in \mathfrak{F}_\text{Auto}''$,
    there exists a time-independent PDF $\bar{p} \in \mathcal{P}(\mathcal{X})$ corresponding to $p$:
    \begin{equation}
        \bar{p}(\mathbf{x}) := p(\mathbf{z}, \tau, \psi_\tau'^{-1}(\tau_0, \tau \mid \tau_1)).
    \end{equation}
\end{theorem}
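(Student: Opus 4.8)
The plan is to verify the three defining conditions for membership in $\mathcal{P}(\mathcal{X})$: that $\bar p$ is a well-defined function $\mathcal{X} \to \mathbb{R}$, that it is continuous, and that it integrates to $1$ over $\mathcal{X}$. The definition $\bar p(\mathbf{x}) := p(\mathbf{z}, \tau, \psi_\tau'^{-1}(\tau_0, \tau \mid \tau_1))$ is precisely the substitution $t \mapsto \psi_\tau'^{-1}(\tau_0, \tau \mid \tau_1)$ announced by the $\bar{\cdot}$ notation, so the first task is to argue this substitution makes sense: since $(\mathbf{v}', \mathbf{\psi}', p') \in \mathfrak{F}''_\text{Auto}$, the flow map $\psi_\tau'$ is (by the hypothesis stated just before the theorem) bijective in its time argument, so $\psi_\tau'^{-1}(\tau_0, \cdot \mid \tau_1) : \mathcal{T} \to \mathbb{R}_+$ is well-defined, and composing with the continuous $p \in \mathcal{P}(\mathcal{X}, \mathbb{R}_+)$ yields a continuous function of $(\mathbf{z}, \tau) = \mathbf{x}$ (using that the inverse of a continuous monotone bijection is continuous).

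The substantive step is the normalization $\int_\mathcal{X} \bar p(\mathbf{x})\,\mathrm{d}\mathbf{x} = 1$. Here I would use the product structure from \cref{def:autonomous-mcnf}. Plugging \cref{eq:mcnf-pdf} and \cref{eq:autonomous-mcnf-pdf} together with \cref{eq:autonomous-mcnf-pdfs} gives
\begin{equation}
    p(\mathbf{z}, \tau, t) = \int_{\mathcal{X}'} p_\mathbf{z}'(\mathbf{z}, t \mid \mathbf{z}')\,\delta_{\psi_\tau'(\tau_0, t \mid \tau_1)}(\tau)\, q_\mathbf{z}'(\mathbf{z}')\,\mathrm{d}\mathbf{x}',
\end{equation}
where I have already used $q_\tau' = \delta_{\tau_1}$ to collapse the $\tau'$-integration onto $\tau' = \tau_1$. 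Now substitute $t = \psi_\tau'^{-1}(\tau_0, \tau \mid \tau_1)$: the Dirac factor becomes $\delta_{\psi_\tau'(\tau_0, \psi_\tau'^{-1}(\tau_0,\tau\mid\tau_1) \mid \tau_1)}(\tau) = \delta_\tau(\tau)$, i.e.\ it is concentrated exactly at the point where we are evaluating. Integrating $\bar p$ first over $\tau \in \mathcal{T}$ kills the Dirac-delta (it contributes $1$), leaving $\int_\mathcal{Z} \big(\int_{\mathcal{X}'} \bar p_\mathbf{z}'(\mathbf{z} \mid \mathbf{z}') q_\mathbf{z}'(\mathbf{z}')\,\mathrm{d}\mathbf{z}'\big)\,\mathrm{d}\mathbf{z}$; swapping the order of integration (Fubini) and using that $p_\mathbf{z}'(\cdot, t \mid \mathbf{z}')$ is a PDF on $\mathcal{Z}$ for every fixed $t$ — in particular for $t = \psi_\tau'^{-1}(\tau_0,\tau\mid\tau_1)$, since the CCNF restricted to $\mathcal{Z}$ is a CNF by \cref{def:ccnf} — the inner integral is $1$, and then $\int_\mathcal{Z} q_\mathbf{z}'(\mathbf{z}')\,\mathrm{d}\mathbf{z}' = 1$ finishes it.

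I expect the main obstacle to be handling the Dirac-delta substitution rigorously rather than formally: the expression $\delta_{\psi_\tau'(\tau_0,t\mid\tau_1)}(\tau)$ as a function of $t$ is a distribution, not a classical function, so the "substitution" $t = \psi_\tau'^{-1}(\tau_0,\tau\mid\tau_1)$ and the subsequent claim that $\bar p$ lands in $C(\mathcal{X})$ need the reparametrization to be interpreted through the change-of-variables $\mathrm{d}t = (\partial \psi_\tau'/\partial t)^{-1}\,\mathrm{d}\tau$, so that $\delta_{\psi_\tau'(\tau_0,t\mid\tau_1)}(\tau)$ integrates against $\mathrm{d}t$ to the Jacobian factor, which is then absorbed consistently — this is exactly the normalization bookkeeping that makes the Dirac-delta "count as $1$." I would either invoke this reparametrization identity explicitly, or (cleaner) note that the claim is really about $\bar p$ viewed as the marginal-over-$\mathcal{T}'$ pushforward already baked into \cref{def:mcnf}, so that the only genuinely new content is that the reparametrized object is still a normalized, continuous PDF — which the product structure of \cref{def:autonomous-mcnf} guarantees coordinatewise. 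Everything else (continuity, measurability) is routine given the continuity assumptions in \cref{def:cnf} and the bijectivity-in-time hypothesis.
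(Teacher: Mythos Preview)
Your proposal is correct and follows essentially the same route as the paper: expand the marginal PDF via the product structure of \cref{def:autonomous-mcnf}, collapse the $\tau'$-integral using $q_\tau' = \delta_{\tau_1}$, substitute $t = \psi_\tau'^{-1}(\tau_0,\tau\mid\tau_1)$, and reduce the remaining Dirac factor. The only cosmetic difference is that the paper packages the last step as a change-of-variables whose (lower-triangular) Jacobian has determinant $1$, arriving at $\bar p(\mathbf{x}) = p_\mathbf{z}(\mathbf{z},\psi_\tau'^{-1}(\tau_0,\tau\mid\tau_1))$, whereas you verify the PDF axioms directly; both are the same formal Dirac bookkeeping you correctly flag as the delicate point.
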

\begin{proof}
    See \cref{thma:stationary-mcnf-pdf}.
\end{proof}

\begin{theorem}[Time-Independent MCNF VF]\label{thm:time-invariant-mcnf-vf}
    For all $((\mathbf{v}, \mathbf{\psi}, p), (\mathbf{v}', \mathbf{\psi}', p')) \in \mathfrak{F}_\text{Auto}''$,
    there exists a time-independent VF $\bar{v} \in C(\mathcal{X}, T\mathcal{X})$ corresponding to $\bar{p}$.
\end{theorem}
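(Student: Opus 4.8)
The plan is to take the MCNF vector-field formula of \cref{lem:mvf} and apply the very same change of variable $t \mapsto \psi_\tau'^{-1}(\tau_0, \tau \mid \tau_1)$ that turned $p$ into the time-independent $\bar p$ in \cref{thm:stationary-mcnf-pdf}, and then to show the result no longer carries a $t$-dependence. Concretely I would set
\begin{equation}
    \bar{\mathbf{v}}(\mathbf{x}) := \mathbf{v}(\mathbf{z}, \tau, \psi_\tau'^{-1}(\tau_0, \tau \mid \tau_1))
\end{equation}
and check that this is well defined, continuous, and visibly autonomous.

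First I would substitute the product structure of \cref{def:autonomous-mcnf} into \cref{lem:mvf}: writing $\mathbf{x}' = (\mathbf{z}', \tau')$, the factorizations $q'(\mathbf{x}') = q_\mathbf{z}'(\mathbf{z}')q_\tau'(\tau')$, $\mathbf{v}'(\mathbf{x}\mid\mathbf{x}') = [\mathbf{v}_\mathbf{z}'(\mathbf{z}\mid\mathbf{z}');\, v_\tau'(\tau\mid\tau')]$ and $p'(\mathbf{x},t\mid\mathbf{x}') = p'_\mathbf{z}(\mathbf{z},t\mid\mathbf{z}')\,p_\tau'(\tau,t\mid\tau')$ let the integral over $\mathcal{X}' = \mathcal{Z}'\times\mathcal{T}'$ split into a $\tau'$-integral and a $\mathbf{z}'$-integral, and the $\tau'$-integral collapses against $q_\tau'(\tau') = \delta_{\tau_1}(\tau')$, sending $\tau'\mapsto\tau_1$ throughout. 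The crucial observation is that the denominator $p(\mathbf{x},t)$ of \cref{lem:mvf} inherits the same factorization: marginalizing \cref{eq:mcnf-pdf} over $\tau'$ via \cref{eq:autonomous-mcnf-pdf}--\cref{eq:autonomous-mcnf-pdfs} gives $p(\mathbf{x},t) = p_\mathbf{z}(\mathbf{z},t)\,p_\tau'(\tau,t\mid\tau_1)$ with $p_\mathbf{z}(\mathbf{z},t) := \int_{\mathcal{Z}'}p'_\mathbf{z}(\mathbf{z},t\mid\mathbf{z}')q_\mathbf{z}'(\mathbf{z}')\mathrm{d}\mathbf{z}'$, so the singular factor $p_\tau'(\tau,t\mid\tau_1) = \delta_{\psi_\tau'(\tau_0,t\mid\tau_1)}(\tau)$ cancels between numerator and denominator. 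After this cancellation the $\tau$-component of $\mathbf{v}$ collapses to $v_\tau'(\tau\mid\tau_1)$ (pull the $\mathbf{z}'$- and $t$-free factor out of the integral and use that the remaining integral equals $p_\mathbf{z}(\mathbf{z},t)/p_\mathbf{z}(\mathbf{z},t)=1$), while the $\mathbf{z}$-component becomes $\int_{\mathcal{Z}'}\mathbf{v}_\mathbf{z}'(\mathbf{z}\mid\mathbf{z}')p'_\mathbf{z}(\mathbf{z},t\mid\mathbf{z}')q_\mathbf{z}'(\mathbf{z}')\mathrm{d}\mathbf{z}'\,/\,p_\mathbf{z}(\mathbf{z},t)$.

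From there the argument is short. The $\tau$-component is already time-independent, and the $\mathbf{z}$-component depends on $t$ only through $p'_\mathbf{z}(\mathbf{z},t\mid\mathbf{z}')$ and $p_\mathbf{z}(\mathbf{z},t)$; substituting $t = \psi_\tau'^{-1}(\tau_0,\tau\mid\tau_1)$ replaces these two quantities by the $\mathbf{z}$-factors of $\bar p$ from \cref{thm:stationary-mcnf-pdf}, so $\bar{\mathbf{v}}$ indeed has no residual $t$. I would then conclude continuity of $\bar{\mathbf{v}}$ from continuity of $\mathbf{v}_\mathbf{z}'$, $v_\tau'$, $p'_\mathbf{z}$, $q_\mathbf{z}'$, continuity of the inverse $\psi_\tau'^{-1}$ (which holds because, by the construction underlying \cref{def:autonomous-mcnf}, $t\mapsto\psi_\tau'(\tau_0,t\mid\tau_1)$ is a continuous bijection on an interval, hence strictly monotone with continuous inverse), and positivity of $p_\mathbf{z}$ on $\mathcal{Z}$, the same positivity that makes \cref{lem:mvf} well posed. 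Optionally I would also verify that $\bar{\mathbf{v}}$ satisfies a stationary continuity equation with $\bar p$ in the sense of \cref{def:cnf}, which follows by the chain rule applied to the original continuity equation under the reparametrization $t\leftrightarrow\tau$ using $\dot\tau = v_\tau'(\tau\mid\tau_1)$.

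The main obstacle is making this rigorous in the presence of the Dirac-delta: dividing by $p(\mathbf{x},t)$ looks meaningless since $p$ contains $\delta_{\psi_\tau'(\tau_0,t\mid\tau_1)}(\tau)$. The resolution — and the one genuinely load-bearing step — is the cancellation of that delta between the numerator and denominator of \cref{lem:mvf}, which is exactly what the product structure of \cref{def:autonomous-mcnf} was engineered to guarantee; everything else is bookkeeping and standard regularity. A secondary point requiring care is the existence and continuity of the time reparametrization $\psi_\tau'^{-1}$, which rests on the bijectivity-in-time hypothesis built into \cref{def:autonomous-mcnf}.
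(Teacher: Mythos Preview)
Your proposal is correct and follows essentially the same route as the paper: start from \cref{lem:mvf}, substitute the product structure and Dirac forms of \cref{def:autonomous-mcnf}, collapse the $\tau'$-integral against $\delta_{\tau_1}$, then apply the reparametrization $t\mapsto\psi_\tau'^{-1}(\tau_0,\tau\mid\tau_1)$ to eliminate the residual $t$-dependence. If anything you are slightly more explicit than the paper about the delta cancellation between numerator and denominator and about the separate fate of the $\tau$-component, but these are presentational refinements rather than a different argument.
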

\begin{proof}
    See \cref{thma:time-invariant-mcnf-vf}.  
\end{proof}

Now, in the same way that we have eliminated time from the MCNF PDF in \cref{thm:stationary-mcnf-pdf} and the MCNF VF in \cref{thm:time-invariant-mcnf-vf}, we now attempt to eliminate it from the CFM loss in \cref{def:cfm-loss} to obtain a time-independent way of training in \cref{def:autonomous-fm}.

\begin{definition}[Auto CFM Loss]\label{def:autonomous-fm}
    A loss function
    \begin{equation}
        L_{\text{Auto}}(\theta) := \underset{\substack{\tau \sim \mathcal{U}[\tau_0, \tau_1] \\ \mathbf{z}' \sim q_\mathbf{z}'(\mathbf{z}') \\ \mathbf{z} \sim \bar{p}_\mathbf{z}'(\mathbf{z}, \tau \mid \mathbf{z}')}}{\mathbb{E}} \frac{\lVert \mathbf{v}_\theta(\mathbf{z}, \tau) - \mathbf{v}'(\mathbf{z}, \tau \mid \mathbf{z}', \tau_1) \rVert^2}{v_\tau'(\tau \mid \tau_1)},
    \end{equation}
    given an autonomous CCNF $(\mathbf{v}', \mathbf{\psi}', p') \in \mathfrak{F}'$, a NN VF $\mathbf{v}_\theta \in C(\mathcal{X} \times \mathbb{R}_{+}, T\mathcal{X})$, and a time $T \in \mathbb{R}_+$ s.t. $\tau_0 = \psi_\tau'(\tau_0, 0 \mid \tau_1)$ and $\tau_1 = \psi_\tau'(\tau_0, T \mid \tau_1)$, where
    $\bar{p}_\mathbf{z}'(\mathbf{z}, \tau) = p_\mathbf{z}'(\mathbf{z}, \tau, \psi_\tau'^{-1}(\tau_0, \tau \mid \tau_1))$.
\end{definition}

The autonomous CFM loss in \cref{def:autonomous-fm} was identified in a similar way to the time-independent MCNF PDF and VF in \cref{thm:stationary-mcnf-pdf} and \cref{thm:time-invariant-mcnf-vf}. By using the deterministic evolution of $\tau$ in addition to its bijective flow map $\psi_\tau'$, the CFM loss can be rendered time-independent.
As a result, they share the same gradients, as stated in \cref{thm:autonomous-fm}.

\begin{theorem}[FM, CFM, and Auto CFM Gradients]\label{thm:autonomous-fm}
    For all $((\mathbf{v}, \mathbf{\psi}, p), (\mathbf{v}', \mathbf{\psi}', p')) \in \mathfrak{F}_\text{Auto}''$:
    \begin{equation}
        \nabla_\theta L_{\text{FM}}(\theta) = \nabla_\theta L_{\text{CFM}}(\theta) = \nabla_\theta L_{\text{Auto}}(\theta).
    \end{equation}
\end{theorem}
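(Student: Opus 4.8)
The plan is to chain together \cref{thm:cfm} with a change-of-variables argument that identifies $\nabla_\theta L_{\text{CFM}}$ with $\nabla_\theta L_{\text{Auto}}$. Since \cref{thm:cfm} already gives $\nabla_\theta L_{\text{FM}}(\theta) = \nabla_\theta L_{\text{CFM}}(\theta)$ for any MCNF-CCNF pair (and $\mathfrak{F}''_\text{Auto} \subset \mathfrak{F}''$), the only new content is the second equality, so I would focus the proof there.

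First I would write out $L_{\text{CFM}}$ explicitly using the product structure of \cref{def:autonomous-mcnf}: the expectation over $\mathbf{x}' \sim q'$ factors as $\mathbf{z}' \sim q_\mathbf{z}'$ together with $\tau' = \tau_1$ deterministically (because $q_\tau' = \delta_{\tau_1}$), and the inner sample $\mathbf{x} \sim p'(\mathbf{x}, t \mid \mathbf{x}')$ factors as $\mathbf{z} \sim p_\mathbf{z}'(\mathbf{z}, t \mid \mathbf{z}')$ together with $\tau = \psi_\tau'(\tau_0, t \mid \tau_1)$ deterministically (because $p_\tau' = \delta_{\psi_\tau'(\tau_0, t \mid \tau_1)}$). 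The key step is then the substitution $t \mapsto \tau = \psi_\tau'(\tau_0, t \mid \tau_1)$, valid since $\psi_\tau'$ is assumed bijective in time; the Jacobian of this one-dimensional change of variables is $\tfrac{\mathrm{d}\tau}{\mathrm{d}t} = v_\tau'(\tau \mid \tau_1)$ by the ODE in \cref{def:cnf}. Converting the $t \sim \mathcal{U}[0,T]$ expectation into an integral, performing this substitution (the endpoints $t=0, T$ map to $\tau = \tau_0, \tau_1$ by hypothesis), and normalizing back to an expectation over $\tau \sim \mathcal{U}[\tau_0, \tau_1]$ produces exactly the $1/v_\tau'(\tau \mid \tau_1)$ weight appearing in $L_{\text{Auto}}$. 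The integrand $\lVert \mathbf{v}_\theta(\mathbf{x}, t) - \mathbf{v}'(\mathbf{x}, t \mid \mathbf{x}')\rVert^2$ becomes $\lVert \mathbf{v}_\theta(\mathbf{z}, \tau) - \mathbf{v}'(\mathbf{z}, \tau \mid \mathbf{z}', \tau_1)\rVert^2$ once we note that, along the deterministic $\tau$-trajectory, the time argument is recovered as $t = \psi_\tau'^{-1}(\tau_0, \tau \mid \tau_1)$, which is precisely the $\bar{\cdot}$ substitution; correspondingly $p_\mathbf{z}'(\mathbf{z}, t \mid \mathbf{z}')$ becomes $\bar p_\mathbf{z}'(\mathbf{z}, \tau \mid \mathbf{z}')$. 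Finally, I would take $\nabla_\theta$ of both sides: since $\theta$ enters only through $\mathbf{v}_\theta$ and the change of variables is $\theta$-independent, $\nabla_\theta L_{\text{CFM}}(\theta) = \nabla_\theta L_{\text{Auto}}(\theta)$, and combining with \cref{thm:cfm} closes the chain.

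I expect the main obstacle to be bookkeeping around the $\tau$-component rather than anything deep: one must be careful that the Dirac-delta structure in \eqref{eq:autonomous-mcnf-pdfs} genuinely collapses the $\tau$-integral (so that sampling $\mathbf{x} \sim p'$ really does fix $\tau$ as a deterministic function of $t$), that the bijectivity and orientation of $\psi_\tau'(\tau_0, \cdot \mid \tau_1)$ makes $v_\tau' > 0$ along the trajectory so the division is well-defined, and that $\mathbf{v}_\theta$ evaluated at $(\mathbf{z},\tau)$ versus at $(\mathbf{x},t)$ agree under the substitution — i.e. that the network is implicitly being reparametrized to take $\tau$ in place of $t$. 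A secondary subtlety is the distinction between the target-space argument $\tau_1$ and the state-space argument $\tau$: in $\mathbf{v}'(\mathbf{z}, \tau \mid \mathbf{z}', \tau_1)$ the conditioning point is $(\mathbf{z}', \tau_1)$ while the evaluation point is $(\mathbf{z}, \tau)$, and one should check this matches $\mathbf{v}_\mathbf{z}'(\mathbf{z} \mid \mathbf{z}')$ from the block decomposition plus the recovered time. None of these steps is hard, but stating them cleanly is where the proof's real work lies; I would relegate the detailed manipulation to an appendix (mirroring how \cref{thm:stationary-mcnf-pdf} and \cref{thm:time-invariant-mcnf-vf} are handled) and keep the main-text proof to the change-of-variables skeleton above.
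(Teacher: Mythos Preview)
Your proposal is correct and mirrors the paper's proof essentially step for step: both first expand $L_{\text{CFM}}$ using the product structure of \cref{def:autonomous-mcnf}, collapse the $\tau'$ and $\tau$ integrals via the Dirac deltas in \eqref{eq:autonomous-mcnf-pdfs}, then perform the one-dimensional substitution $t \mapsto \tau$ via $\psi_\tau'$ (Jacobian $v_\tau'(\tau \mid \tau_1)$) to obtain $L_{\text{Auto}}$, and finally invoke \cref{thm:cfm} for the remaining equality. The paper in fact establishes $L_{\text{CFM}} = L_{\text{Auto}}$ at the level of the losses themselves (not just their gradients), so your final ``take $\nabla_\theta$ of both sides'' is slightly weaker than what the argument actually yields, but this is a cosmetic difference.
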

\begin{proof}
    See \cref{thma:autonomous-fm}.
\end{proof}

While the auto CFM loss $L_{\text{Auto}}$ in \cref{def:autonomous-fm} is effective, there may be issues with the normalization term $v_\tau'(\tau \mid \tau_1)$ in the denominator, particularly in the case where we want $\mathbf{v}(\mathbf{z}', \tau' \mid \mathbf{z}', \tau') = 0$, which is the case if we desire stability (see \cref{sec:stability}).
One straightforward way to relieve this issue is to instead sample with $\tau \sim \mathcal{U}[\tau_0, \tau_1 - \epsilon]$ for some small $\epsilon \in \mathbb{R}_+$.
However, since we are interpreting the MCNF VF as a time-independent, we can just take the expectation over $\tau$ from the start without employing integration by substitution of the time argument, as we do in \cref{def:unormalised-autonomous-fm}.

\begin{definition}[Unormalized Auto CFM Loss]\label{def:unormalised-autonomous-fm}
    A loss function
    \begin{equation}
        L_{\text{Auto}}'(\theta) = \underset{\substack{\tau \sim \mathcal{U}[\tau_0, \tau_1] \\ \mathbf{z}' \sim q_\mathbf{z}'(\mathbf{z}') \\ \mathbf{z} \sim p_\mathbf{z}'(\mathbf{z}, \tau \mid \mathbf{z}')}}{\mathbb{E}} \lVert \mathbf{v}_\theta(\mathbf{z}, \tau) - \mathbf{v}(\mathbf{z}, \tau \mid \mathbf{z}', \tau_1) \rVert^2
    \end{equation}
    given an autonomous CCNF $(\mathbf{v}', \mathbf{\psi}', p') \in \mathfrak{F}'$, a NN VF $\mathbf{v}_\theta \in C(\mathcal{X} \times \mathbb{R}_{+}, T\mathcal{X})$, and a time $T \in \mathbb{R}_+$ s.t. $\tau_0 = \psi_\tau'(\tau_0, 0 \mid \tau_1)$ and $\tau_1 = \psi_\tau'(\tau_0, T \mid \tau_1)$.
\end{definition}

Importantly, the unormalized auto CFM loss $L_\text{Auto}'$ in \cref{def:unormalised-autonomous-fm} does not exhibit undefined behavior when $\tau = \tau_1$.
In the next section, we will consider $\tau = \tau_1$ as representing a stable state on the support of the target distribution $q'$.

\subsection{Stable MCNFs and CCNFs}\label{sec:stable-mcnfs}

Now that we have constructed an autonomous MCNF VF, we can enforce the remaining stability conditions in \cref{thm:invariance-principle}, namely that there exists some scalar function $H$ s.t. $\mathcal{L}_\mathbf{v}H(\mathbf{x}) \leq 0$.
To do this, we need to consider the properties of both the NN VF we want to learn and the MCNF VF.

First, we can enforce the stability of the NN VF in the sense of \cref{thm:invariance-principle} in a straightforward way by modeling it as a gradient field, as in \cref{lem:gradient-flow}.
We define this NN vector field in \cref{def:gradient-cnf}.

\begin{definition}[NN Gradient Field]\label{def:gradient-cnf}
    An autonomous CNF $(\mathbf{v}_\theta, \mathbf{\psi}_\theta, p_\theta) \in \mathfrak{F}$
    with state space $\mathcal{X} := \mathcal{Z} \times \mathcal{T}$ s.t. $\mathcal{T} \subset \mathbb{R}$, a scalar function $H_\theta \in C(\mathcal{X}, \mathbb{R}_+)$, and a VF
    \begin{equation}
        \mathbf{v}_\theta(\mathbf{z}, \tau) := -\nabla_{(\mathbf{z}, \tau)} H_\theta(\mathbf{z}, \tau)^\top.
    \end{equation}
\end{definition}

While the NN VF in \cref{def:gradient-cnf} is stable to the subset of the state space where $\nabla_{(\mathbf{z}, \tau)} H(\mathbf{z}, \tau) = 0$ (i.e. the set of all local minima of $H$), it is not guaranteed that this subset lies in the support of the target distribution $q'$.
Therefore, we need to ensure that the CCNF VF used to train the NN VF is stable to the samples of the target distribution $q'$.
We define the space of MCNF-CCNF pairs that permit this in \cref{def:stable-mcnf-space}.

\begin{definition}[Stable MCNF-CCNF Space]\label{def:stable-mcnf-space}
    A subset
    \begin{equation}
        \mathfrak{F}_\text{Stable}'' :\subset \mathfrak{F}_\text{Auto}'' \quad \text{s.t.} \quad \exists H': \mathcal{X}' \to C(\mathcal{X}, \mathbb{R}_+)
    \end{equation}
    s.t. for all $((\mathbf{v}, \mathbf{\psi}, p), (\mathbf{v}', \mathbf{\psi}', p')) \in \mathfrak{F}_\text{Stable}''$:
    \begin{align}
        H'(\mathbf{x} \mid \mathbf{x}') =& \frac{1}{2}(\mathbf{x} - \mathbf{x}')^\top \mathbf{A} (\mathbf{x} - \mathbf{x}') \\
        \mathbf{v}'(\mathbf{x} \mid \mathbf{x}') =& -\nabla_{\mathbf{x}} H'(\mathbf{x} \mid \mathbf{x}')^\top = -\mathbf{A}(\mathbf{x} - \mathbf{x}') 
    \end{align}
    where $\mathbf{A} \in \mathbb{R}_+^{\mathrm{dim}(\mathcal{X}) \times \mathrm{dim}(\mathcal{X})}$ is positive definite.
\end{definition}

Importantly, the CCNF VF $\mathbf{v}'$ in \cref{def:stable-mcnf-space} is a gradient field of a quadratic potential function $H'(\cdot \mid \mathbf{x}')$, which means that it is exponentially stable to a single point $\mathbf{x}'$ (i.e. the global minimum of $H'$).
Moreover, it is a linear VF that grants a flow map $\psi'$ and PDF $p'$ in a tractable form.
However, the flow map and PDF involve the matrix exponential (see Section 6.2 of \cite{Srkk2019AppliedSD}), e.g.
$$
\begin{aligned}
\mathbf{\psi}'(\mathbf{x}, t \mid \mathbf{x}') =& \mathbf{x}' + \exp\left(-\mathbf{A}t\right)(\mathbf{x} - \mathbf{x}'), 
\end{aligned}
$$
which can be difficult to compute in practice \cite{van1977sensitivity}, unless $\mathbf{A}$ is either diagonalizable, diagonal, or scalar.
In this work, we limit our attention to a semi-scalar case, where the sub-matrices corresponding to $\mathbf{z}$ and $\tau$ are diagonal with identical eigenvalues; the other cases will be left to future work\footnote{We could also consider different eigenvalues for each sample.}.
We define this MCNF-CCNF space in \cref{def:stable-mcnf-scalar}.

\begin{definition}[Scalar Stable MCNF-CCNF Space]\label{def:stable-mcnf-scalar}
    A subset
    \begin{equation}
        \mathfrak{F}_\text{S-Stable}'' :\subset \mathfrak{F}_\text{Stable}'' \quad \text{s.t.} \quad         \mathbf{A} = \begin{bmatrix}
            \lambda_\mathbf{z} \mathbf{I}_{\mathrm{dim}(\mathcal{Z})} & \mathbf{0} \\
            \mathbf{0} & \lambda_\tau
        \end{bmatrix}
    \end{equation}
    and $\lambda_\mathbf{z}, \lambda_\tau \in \mathbb{R}_+$, where $\mathbf{I}_{\mathrm{dim}(\mathcal{Z})}$ is the identity matrix.
\end{definition}

\begin{figure}
    \centering
    \includegraphics[width=0.92\linewidth]{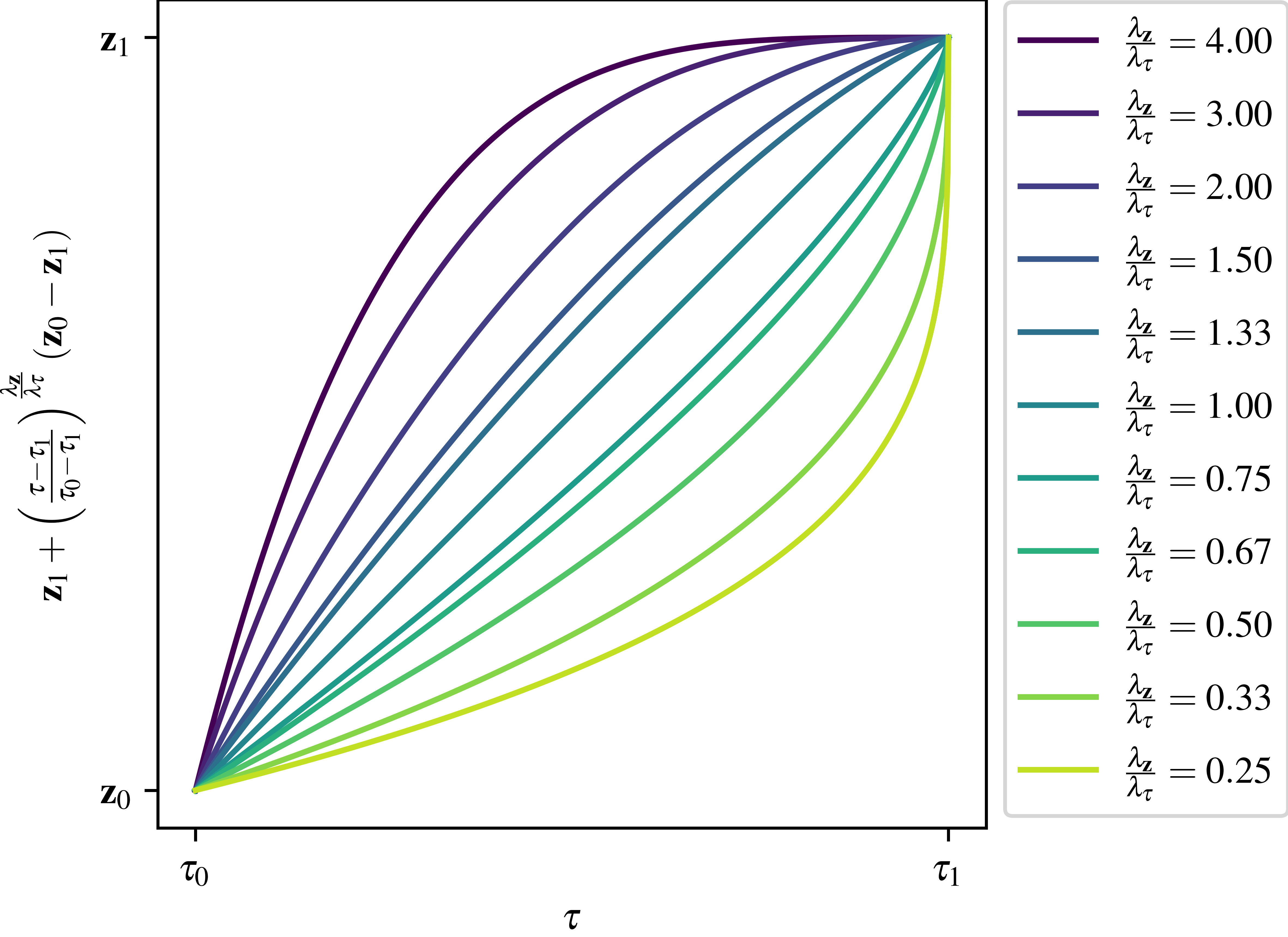}
    \caption{
        A plot of the scalar stable-CCNF flow map $\mathbf{\psi}_\mathbf{z}'(\mathbf{z}, \psi_\tau'^{-1}(\tau_0, \tau \mid \tau_1) \mid \mathbf{z}')$ over values of $\tau \in [\tau_0, \tau_1]$ for different values of $\lambda_\mathbf{z}$ and $\lambda_\tau$.
        Note that $\frac{\lambda_\mathbf{z}}{\lambda_\tau} = 1$ corresponds to the OT-CCNF flow map as shown in \cref{cor:ot-fm}.
    }
    \label{fig:interpolation}
\end{figure}

We will now explicitly state the CCNF VF, flow map, and PDF corresponding to a scalar stable MCNF-CCNF space in \cref{lem:scalar-stable-mcnf-components}.

\begin{lemma}[Scalar Stable CCNF Components]\label{lem:scalar-stable-mcnf-components}
    For all $((\mathbf{v}, \mathbf{\psi}, p), (\mathbf{v}', \mathbf{\psi}', p')) \in \mathfrak{F}_\text{S-Stable}''$:
    \begin{align}
        \mathbf{v}'(\mathbf{x} \mid \mathbf{x}') 
        &= \begin{bmatrix}
            -\lambda_\mathbf{z}(\mathbf{z} - \mathbf{z}') \\
            -\lambda_\tau(\tau - \tau')
        \end{bmatrix} \\
        \mathbf{\psi}'(\mathbf{x}, t \mid \mathbf{x}')
        &= \begin{bmatrix}
            \mathbf{z}' + \exp\left(-\lambda_\mathbf{z}t\right)(\mathbf{z} - \mathbf{z}') \\
            \tau' + \exp\left(-\lambda_\tau t\right)(\tau - \tau')
        \end{bmatrix} \\
        p'(\mathbf{x}, t \mid \mathbf{x}') &= \mathcal{N}\left(
        \mathbf{\mu}'(t \mid \mathbf{x}'), 
        \mathbf{\Sigma}'(t \mid \mathbf{x}')\right) \\
        \mathbf{\mu}'(t \mid \mathbf{x}') &= 
        \begin{bmatrix}
            \mathbf{z}' + \exp\left(-\lambda_\mathbf{z}t\right)(\mathbf{z}_0 - \mathbf{z}') \\
            \tau' + \exp\left(-\lambda_\tau t\right)(\tau_0 - \tau')
        \end{bmatrix} \\
        \mathbf{\Sigma}'(t \mid \mathbf{x}') &= \begin{bmatrix}
            \exp(-2 \lambda_\mathbf{z} t) \mathbf{\Sigma}_0 & \mathbf{0} \\
            \mathbf{0} & 0
        \end{bmatrix},
    \end{align}
    where $\mathbf{z}_0 \in \mathcal{Z}$ 
    and $\mathbf{\Sigma}_0 = \mathbb{R}^{\mathrm{dim}(\mathcal{Z}) \times \mathrm{dim}(\mathcal{Z})}$ are the initial mean and covariance 
    of $\mathbf{z}$, and $\tau_0 \in \mathcal{T}$ and $0$ are the initial mean and variance of $\tau$.
\end{lemma}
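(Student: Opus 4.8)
The plan is to specialize the general stable CCNF of \cref{def:stable-mcnf-space} to the block-scalar $\mathbf{A}$ of \cref{def:stable-mcnf-scalar} and then read off each of the three objects in turn, exploiting the fact that the dynamics decouple into an independent $\mathbf{z}$-block and $\tau$-block. For the VF, I would simply substitute $\mathbf{A} = \mathrm{diag}(\lambda_\mathbf{z}\mathbf{I}_{\mathrm{dim}(\mathcal{Z})}, \lambda_\tau)$ into $\mathbf{v}'(\mathbf{x}\mid\mathbf{x}') = -\mathbf{A}(\mathbf{x}-\mathbf{x}')$ from \cref{def:stable-mcnf-space} and write $\mathbf{x} = (\mathbf{z},\tau)$, $\mathbf{x}' = (\mathbf{z}',\tau')$; this yields the claimed block form immediately, with no work beyond bookkeeping.

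For the flow map, by \cref{def:cnf} the CCNF flow map solves the linear ODE $\tfrac{\mathrm{d}}{\mathrm{d}t}\mathbf{\psi}'(\mathbf{x},t\mid\mathbf{x}') = -\mathbf{A}(\mathbf{\psi}'(\mathbf{x},t\mid\mathbf{x}') - \mathbf{x}')$ with $\mathbf{\psi}'(\mathbf{x},0\mid\mathbf{x}') = \mathbf{x}$, whose solution is $\mathbf{\psi}'(\mathbf{x},t\mid\mathbf{x}') = \mathbf{x}' + \exp(-\mathbf{A}t)(\mathbf{x}-\mathbf{x}')$. Because $\mathbf{A}$ is block diagonal with scalar blocks, $\exp(-\mathbf{A}t) = \mathrm{diag}(e^{-\lambda_\mathbf{z}t}\mathbf{I}_{\mathrm{dim}(\mathcal{Z})}, e^{-\lambda_\tau t})$, which gives the stated componentwise expressions. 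I would also note here, since it is needed downstream, that $\lambda_\tau > 0$ together with $\tau_0 \neq \tau_1$ (from \cref{def:autonomous-mcnf}) makes $\psi_\tau'(\tau_0, t \mid \tau_1) = \tau_1 + e^{-\lambda_\tau t}(\tau_0 - \tau_1)$ strictly monotone, hence bijective in $t$, which is precisely the hypothesis invoked in \cref{thm:stationary-mcnf-pdf}, \cref{thm:time-invariant-mcnf-vf}, and \cref{def:autonomous-fm}.

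For the PDF, \cref{def:cnf} also tells us $p'(\cdot, t\mid\mathbf{x}')$ solves the continuity equation with the linear drift $\mathbf{v}'(\mathbf{x}\mid\mathbf{x}') = -\mathbf{A}(\mathbf{x}-\mathbf{x}')$ and no diffusion term, so by the standard moment propagation for linear Gauss--Markov systems (Section 6 of \cite{Srkk2019AppliedSD}) a Gaussian initial condition stays Gaussian, with mean and covariance obeying $\dot{\mathbf{\mu}}' = -\mathbf{A}(\mathbf{\mu}' - \mathbf{x}')$ and $\dot{\mathbf{\Sigma}}' = -\mathbf{A}\mathbf{\Sigma}' - \mathbf{\Sigma}'\mathbf{A}^\top$. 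Taking the initial moments $\mathbf{\mu}'(0\mid\mathbf{x}') = (\mathbf{z}_0,\tau_0)$ and $\mathbf{\Sigma}'(0\mid\mathbf{x}') = \mathrm{diag}(\mathbf{\Sigma}_0, 0)$ — the zero $\tau$-variance encoding the deterministic $\tau$ of \cref{def:autonomous-mcnf} — and integrating, the mean is $\mathbf{x}' + \exp(-\mathbf{A}t)((\mathbf{z}_0,\tau_0) - \mathbf{x}')$, and since $\mathbf{A} = \mathbf{A}^\top$ the covariance is $\exp(-\mathbf{A}t)\,\mathrm{diag}(\mathbf{\Sigma}_0,0)\,\exp(-\mathbf{A}t)$, which block-by-block is $\mathrm{diag}(e^{-2\lambda_\mathbf{z}t}\mathbf{\Sigma}_0, 0)$; substituting the scalar exponentials gives the claimed forms. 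I would then close the loop with \cref{def:autonomous-mcnf}: the Gaussian factorizes across the $\mathbf{z}$- and $\tau$-blocks because both its mean and covariance are block diagonal, and its degenerate $\tau$-marginal is exactly $\delta_{\psi_\tau'(\tau_0, t\mid\tau')}$, matching \eqref{eq:autonomous-mcnf-pdf} and \eqref{eq:autonomous-mcnf-pdfs}; the requirement $p'(\mathbf{x}, T\mid\mathbf{x}')\approx\delta_{\mathbf{x}'}$ from \cref{def:ccnf} then follows from $\lambda_\mathbf{z},\lambda_\tau > 0$.

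The only genuinely delicate point — and the one I would treat with care — is the degenerate $\tau$-component: the ``Gaussian'' with zero $\tau$-variance is a Dirac delta, so the moment-ODE argument has to be read in a limiting/distributional sense, or, equivalently, one verifies directly that the deterministic $\tau$-flow induces the claimed $\delta$-marginal as a solution of the continuity equation, which is consistent with the way $\mathfrak{F}_\text{Auto}''$ was set up in \cref{def:autonomous-mcnf}. Everything else is routine linear-ODE and Gaussian bookkeeping, so I do not expect any further obstacle.
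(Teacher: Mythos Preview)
Your proposal is correct and follows essentially the same approach as the paper: the paper's proof is the one-line remark that the lemma is ``a straightforward application of Section 6.2 of \cite{Srkk2019AppliedSD} with a zero diffusion coefficient,'' and your write-up simply unpacks that reference into the explicit linear-ODE solution and Gaussian moment-propagation steps. Your additional remarks on the bijectivity of $\psi_\tau'$ and the degenerate $\tau$-marginal are consistent with the surrounding framework but go beyond what the paper records for this particular lemma.
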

\begin{proof}
    A straightforward application of Section 6.2. of \cite{Srkk2019AppliedSD} with a zero diffusion coefficient.
\end{proof}

Now that we have seen how the components of the scalar stable CCNF look in \cref{lem:scalar-stable-mcnf-components}, it is interesting to see how this affects the loss functions $L_\text{Auto}$ and $L_\text{Auto}'$.
Both of these loss functions involve an expectation over the CCNF PDF $\bar{p}_\mathbf{z}'$.
Interestingly, this CCNF PDF yields an interpolation between $\mathbf{z}_0$ and $\mathbf{z_1}$ for which we can control the rate of interpolation by varying $\lambda_\mathbf{z}$ and $\lambda_\tau$, as shown in \cref{lem:scalar-stable-interpolation} shown in \cref{fig:interpolation}.

\begin{lemma}[Scalar Stable CCNF PDF]\label{lem:scalar-stable-interpolation}
    For all $((\mathbf{v}, \mathbf{\psi}, p), (\mathbf{v}', \mathbf{\psi}', p')) \in \mathfrak{F}_\text{S-Stable}''$:
    \begin{align}
        \bar{p}_\mathbf{z}'(\mathbf{z}, \tau \mid \mathbf{z}') =& \mathcal{N}\left(\bar{\mathbf{\mu}}_\mathbf{z}(\tau \mid \mathbf{z}'), \bar{\mathbf{\Sigma}}_\mathbf{z}(\tau \mid \mathbf{z}')\right) \\
        \bar{\mathbf{\mu}}_\mathbf{z}(\tau \mid \mathbf{z}') =& \mathbf{z}' + \left(\frac{\tau - \tau_1}{\tau_0 - \tau_1}\right)^{\frac{\lambda_\mathbf{z}}{\lambda_\tau}}(\mathbf{z}_0 - \mathbf{z}') \\
        \bar{\mathbf{\Sigma}}_\mathbf{z}(t \mid \mathbf{z}') =& \left(\frac{\tau - \tau_1}{\tau_0 - \tau_1}\right)^{\frac{2\lambda_\mathbf{z}}{\lambda_\tau}} \mathbf{\Sigma}_0.
    \end{align}
\end{lemma}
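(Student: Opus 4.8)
The plan is to obtain $\bar{p}_\mathbf{z}'$ purely by the time substitution $t \mapsto \psi_\tau'^{-1}(\tau_0, \tau \mid \tau_1)$ prescribed in \cref{def:autonomous-fm}, feeding in the explicit scalar-stable components from \cref{lem:scalar-stable-mcnf-components}. First I would invert the $\tau$-flow map. By \cref{lem:scalar-stable-mcnf-components}, $\psi_\tau'(\tau_0, t \mid \tau_1) = \tau_1 + \exp(-\lambda_\tau t)(\tau_0 - \tau_1)$, so setting this equal to $\tau$ and solving for $t$ gives
\begin{equation}
    \psi_\tau'^{-1}(\tau_0, \tau \mid \tau_1) = -\frac{1}{\lambda_\tau}\log\!\left(\frac{\tau - \tau_1}{\tau_0 - \tau_1}\right),
\end{equation}
which is well defined because $\lambda_\tau > 0$ makes $t \mapsto \exp(-\lambda_\tau t)$ a strictly monotone bijection, and the ratio $(\tau-\tau_1)/(\tau_0-\tau_1)$ lies in $(0,1]$ for $\tau$ between $\tau_0$ and $\tau_1$ with $\tau \neq \tau_1$, irrespective of the sign of $\tau_0 - \tau_1$.

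Next I would substitute this into the $\mathbf{z}$-marginal of the CCNF PDF. From \cref{lem:scalar-stable-mcnf-components}, that marginal is $\mathcal{N}(\mathbf{z}' + e^{-\lambda_\mathbf{z} t}(\mathbf{z}_0 - \mathbf{z}'),\, e^{-2\lambda_\mathbf{z} t}\mathbf{\Sigma}_0)$, so the only $t$-dependence enters through $e^{-\lambda_\mathbf{z} t}$ and $e^{-2\lambda_\mathbf{z} t}$. Using the inverse above and the identity $a^{\log_b c} = c^{\log_b a}$ type manipulation of exponentials and logarithms,
\begin{equation}
    \left. e^{-\lambda_\mathbf{z} t}\right|_{t = \psi_\tau'^{-1}(\tau_0, \tau \mid \tau_1)} = \exp\!\left(\frac{\lambda_\mathbf{z}}{\lambda_\tau}\log\!\left(\frac{\tau - \tau_1}{\tau_0 - \tau_1}\right)\right) = \left(\frac{\tau - \tau_1}{\tau_0 - \tau_1}\right)^{\lambda_\mathbf{z}/\lambda_\tau},
\end{equation}
and likewise $e^{-2\lambda_\mathbf{z} t}$ becomes the same base raised to $2\lambda_\mathbf{z}/\lambda_\tau$. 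Plugging these into the mean and covariance yields exactly $\bar{\mathbf{\mu}}_\mathbf{z}$ and $\bar{\mathbf{\Sigma}}_\mathbf{z}$ as claimed; the entitlement to manipulate the $\mathbf{z}$-block in isolation is supplied by the product structure $p'(\mathbf{x}, t \mid \mathbf{x}') = p_\mathbf{z}'(\mathbf{z}, t \mid \mathbf{z}')\,p_\tau'(\tau, t \mid \tau')$ of \cref{def:autonomous-mcnf} and the decoupled form of $\mathbf{\psi}'$ in \cref{lem:scalar-stable-mcnf-components}, so that $\psi_\tau'$ genuinely depends on $\tau$ alone.

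The calculation itself is routine; the only point requiring care is the first step. The main obstacle is confirming that $\psi_\tau'^{-1}(\tau_0, \cdot \mid \tau_1)$ is a bona fide function of $\tau$ on the interval traversed by $\tau$, and that the ensuing exponent manipulation is valid regardless of whether $\tau_0 < \tau_1$ or $\tau_0 > \tau_1$ — i.e. that $(\tau - \tau_1)/(\tau_0 - \tau_1) \in (0,1]$ throughout, so the logarithm is defined and the fractional power $(\cdot)^{\lambda_\mathbf{z}/\lambda_\tau}$ is real and nonnegative. This follows from $\tau_0 \neq \tau_1$ (imposed in \cref{def:autonomous-mcnf}) together with $\lambda_\mathbf{z}, \lambda_\tau \in \mathbb{R}_+$; taking the limit $\tau \to \tau_1$ then recovers the expected Dirac-delta behaviour ($\bar{\mathbf{\mu}}_\mathbf{z} \to \mathbf{z}'$, $\bar{\mathbf{\Sigma}}_\mathbf{z} \to \mathbf{0}$), so no further structure is needed.
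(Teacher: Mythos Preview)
Your proposal is correct and follows exactly the approach the paper has in mind: the lemma is stated in the paper without proof, as an immediate consequence of substituting the inverse $\tau$-flow map $\psi_\tau'^{-1}(\tau_0,\tau\mid\tau_1)=-\tfrac{1}{\lambda_\tau}\log\!\big(\tfrac{\tau-\tau_1}{\tau_0-\tau_1}\big)$ into the Gaussian $\mathbf{z}$-marginal from \cref{lem:scalar-stable-mcnf-components}, using the bar convention introduced in \cref{def:autonomous-fm}. Your treatment of the well-definedness of the inverse and the positivity of the ratio is a welcome addition that the paper leaves implicit.
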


An interesting thing to note from \cref{lem:scalar-stable-interpolation} is that the choice of $\lambda_\mathbf{z} = \lambda_\tau$, shown by the linear curve in \cref{fig:interpolation}, makes the mean $\bar{\mathbf{\mu}}_\mathbf{z}$ of $\mathbf{z}$ look similar to the OT-CCNF flow map of \cite{Lipman2022FlowMF} when we choose $\tau_0 = 0$ and $\tau_1 = 1$, as shown in \cref{cor:ot-fm}.

\begin{corollary}[Scalar Stable-CCNF vs. OT-CCNF]\label{cor:ot-fm}
    For all $((\mathbf{v}, \mathbf{\psi}, p), (\mathbf{v}', \mathbf{\psi}', p')) \in \mathfrak{F}_\text{S-Stable}''$:
    if $\tau_0 = 0$, $\tau_1 = 1$, and $\lambda_\mathbf{z} = \lambda_\tau$, then $\bar{\psi}_\mathbf{z}'(\mathbf{z}, \tau \mid \mathbf{z}')$ and $\bar{\mathbf{v}}_\mathbf{z}'(\mathbf{z}, \tau \mid \mathbf{z}')$ are equivalent to the OT-CCNF flow map and VF of Example 2 in \cite{Lipman2022FlowMF} with $t \gets \tau$, $\mathbf{x} \gets \mathbf{z}$, $\mathbf{x}_1 \gets \mathbf{z}'$, $\sigma_\text{min} \gets 0$.
\end{corollary}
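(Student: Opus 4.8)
The plan is to specialize \cref{lem:scalar-stable-mcnf-components} and \cref{lem:scalar-stable-interpolation} to the stated parameter choices and match the resulting expressions term-by-term against Example 2 of \cite{Lipman2022FlowMF}. Recall that the OT-CCNF of \cite{Lipman2022FlowMF} (with $\sigma_\text{min} = 0$) has conditional flow map $\psi(\mathbf{x}, t \mid \mathbf{x}_1) = t\,\mathbf{x}_1 + (1-t)\,\mathbf{x}$ and conditional VF $\mathbf{v}(\mathbf{x}, t \mid \mathbf{x}_1) = (\mathbf{x}_1 - \mathbf{x})/(1 - t)$, with the standard normal as the source at $t = 0$ and $\delta_{\mathbf{x}_1}$ as the target at $t = 1$.

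First I would compute $\bar{\psi}_\mathbf{z}'$. From \cref{lem:scalar-stable-mcnf-components}, the $\mathbf{z}$-component of the flow map is $\psi_\mathbf{z}'(\mathbf{z}, t \mid \mathbf{z}') = \mathbf{z}' + \exp(-\lambda_\mathbf{z} t)(\mathbf{z} - \mathbf{z}')$, and the $\tau$-component is $\psi_\tau'(\tau_0, t \mid \tau_1) = \tau_1 + \exp(-\lambda_\tau t)(\tau_0 - \tau_1)$. Setting $\tau_0 = 0$, $\tau_1 = 1$ and inverting the $\tau$-flow gives $\exp(-\lambda_\tau t) = (\tau - \tau_1)/(\tau_0 - \tau_1) = (\tau - 1)/(-1) = 1 - \tau$, hence $t = \psi_\tau'^{-1}(\tau_0, \tau \mid \tau_1) = -\tfrac{1}{\lambda_\tau}\log(1-\tau)$. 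Substituting into $\psi_\mathbf{z}'$ and using $\lambda_\mathbf{z} = \lambda_\tau$ so that $\exp(-\lambda_\mathbf{z} t) = (1-\tau)^{\lambda_\mathbf{z}/\lambda_\tau} = 1 - \tau$, we get $\bar{\psi}_\mathbf{z}'(\mathbf{z}, \tau \mid \mathbf{z}') = \mathbf{z}' + (1-\tau)(\mathbf{z} - \mathbf{z}') = \tau \mathbf{z}' + (1-\tau)\mathbf{z}$, which is exactly the OT-CCNF flow map under $t \gets \tau$, $\mathbf{x} \gets \mathbf{z}$, $\mathbf{x}_1 \gets \mathbf{z}'$. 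The same substitution applied to the $\mathbf{z}$-VF $\mathbf{v}_\mathbf{z}'(\mathbf{x} \mid \mathbf{x}') = -\lambda_\mathbf{z}(\mathbf{z} - \mathbf{z}')$ needs to be re-expressed along the flow: evaluating $\mathbf{v}_\mathbf{z}'$ at the point $\mathbf{z} = \bar{\psi}_\mathbf{z}'(\mathbf{z}_0, \tau \mid \mathbf{z}')$ gives $-\lambda_\mathbf{z}(1-\tau)(\mathbf{z}_0 - \mathbf{z}')$; rewriting $(\mathbf{z}_0 - \mathbf{z}') = (\mathbf{z} - \mathbf{z}')/(1-\tau)$ from the flow-map relation yields $\bar{\mathbf{v}}_\mathbf{z}'(\mathbf{z}, \tau \mid \mathbf{z}') = -\lambda_\mathbf{z}(\mathbf{z} - \mathbf{z}') = \lambda_\mathbf{z}(\mathbf{z}' - \mathbf{z})$; the overall time reparametrization $\mathrm{d}t/\mathrm{d}\tau = 1/(\lambda_\tau(1-\tau))$ rescales this to $(\mathbf{z}' - \mathbf{z})/(1-\tau)$ in $\tau$-time (since $\lambda_\mathbf{z} = \lambda_\tau$), matching the OT-CCNF VF. I would also verify the boundary PDFs agree: \cref{lem:scalar-stable-interpolation} with $\lambda_\mathbf{z} = \lambda_\tau$, $\tau_0 = 0$, $\tau_1 = 1$ gives $\bar{\boldsymbol\mu}_\mathbf{z}(\tau \mid \mathbf{z}') = \mathbf{z}' + (1-\tau)(\mathbf{z}_0 - \mathbf{z}')$ and $\bar{\boldsymbol\Sigma}_\mathbf{z}(\tau \mid \mathbf{z}') = (1-\tau)^2 \mathbf{\Sigma}_0$, which with $\mathbf{z}_0 = \mathbf{0}$, $\mathbf{\Sigma}_0 = \mathbf{I}$ is precisely the OT-CCNF Gaussian path $\mathcal{N}(\tau\mathbf{z}', (1-\tau)^2\mathbf{I})$ of \cite{Lipman2022FlowMF}.

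The main obstacle is bookkeeping rather than mathematical depth: one must be careful about the distinction between the VF as a function of the raw state argument $\mathbf{z}$ versus the VF evaluated along a trajectory, and about how the time reparametrization $t \mapsto \tau$ induces a rescaling $\bar{\mathbf{v}} = \mathbf{v}'\cdot \mathrm{d}t/\mathrm{d}\tau$ of the VF (this is exactly why the $1/v_\tau'(\tau \mid \tau_1)$ factor appears in \cref{def:autonomous-fm}). The reason the hypothesis $\lambda_\mathbf{z} = \lambda_\tau$ is needed is that it makes the exponent $\lambda_\mathbf{z}/\lambda_\tau$ in \cref{lem:scalar-stable-interpolation} equal to $1$, collapsing the power-law interpolation $((\tau-\tau_1)/(\tau_0-\tau_1))^{\lambda_\mathbf{z}/\lambda_\tau}$ to the affine $1-\tau$; and the choice $\tau_0 = 0$, $\tau_1 = 1$ is what normalizes the base of that power to $1 - \tau$ in the first place. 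Once these substitutions are made explicit, the identification with Example 2 of \cite{Lipman2022FlowMF} is immediate, so I would present the argument as three short displayed computations (flow map, VF, Gaussian path) followed by the term-by-term comparison.
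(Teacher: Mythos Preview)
Your proposal is correct and follows essentially the same route as the paper: both compute $\bar{\psi}_\mathbf{z}'$ by substituting $t = \psi_\tau'^{-1}(\tau_0,\tau\mid\tau_1)$ into the exponential flow map of \cref{lem:scalar-stable-mcnf-components}, and both obtain $\bar{\mathbf{v}}_\mathbf{z}'$ via the chain rule $\partial\mathbf{z}/\partial\tau = (\partial\mathbf{z}/\partial t)(\partial t/\partial\tau)$, which the paper writes compactly as $\bar{\mathbf{v}}_\mathbf{z}' = \mathbf{v}_\mathbf{z}'/v_\tau'$. Your detour of evaluating $\mathbf{v}_\mathbf{z}'$ along the trajectory and then back-substituting is unnecessary (it returns you to $-\lambda_\mathbf{z}(\mathbf{z}-\mathbf{z}')$ before you apply the reparametrization anyway), and the Gaussian-path check is extra relative to the statement, but neither affects correctness.
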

\begin{proof}
    The OT-CCNF flow map and VF of \cite{Lipman2022FlowMF} are given by
    \begin{align}
        \mathbf{\psi}_\text{OT}(\mathbf{x}, t \mid \mathbf{x}_1) =& (1 - (1 - \sigma_\text{min})t)\mathbf{x} + t\mathbf{x}_1 \\
        \mathbf{v}_\text{OT}(\mathbf{x}, t \mid \mathbf{x}_1) =& \frac{\mathbf{x}_1 - (1 - \sigma_\text{min}) \mathbf{x}}{1 - (1 - \sigma_\text{min})t}.
    \end{align}
    The scalar stable-CCNF flow map and VF as a function of $\tau$ are given with the substitutions as 
    \begin{align}
        \bar{\mathbf{\psi}}_\mathbf{z}'(\mathbf{z}, \tau \mid \mathbf{z}') =& \mathbf{z}' + \left(\frac{\tau - \tau_1}{\tau_0 - \tau_1}\right)^{\frac{\lambda_\mathbf{z}}{\lambda_\tau}}(\mathbf{z} - \mathbf{z}') \\
        =& (1 - \tau) \mathbf{z} + \tau \mathbf{z}' \nonumber \\
        \bar{\mathbf{v}}_\mathbf{z}'(\mathbf{z}, \tau \mid \mathbf{z}') =& \frac{\mathbf{v}_\mathbf{z}'(\mathbf{z} \mid \mathbf{z}')}{v_\tau'(\tau \mid \tau_1)} = \frac{\mathbf{z}' - \mathbf{z}}{1 - \tau},
    \end{align}
    where $\bar{\mathbf{v}}_\mathbf{z}'$ is found through the chain rule $\frac{\partial \mathbf{z}}{\partial \tau} = \frac{\partial \mathbf{z}}{\partial t}\frac{\partial t}{\partial \tau}$ and $\bar{\mathbf{\psi}}_\mathbf{z}'$ is its solution from $\tau_0$ to $\tau_1$.
\end{proof}

Note that in the OT-CCNF VF of \cite{Lipman2022FlowMF}, the added $\sigma_\text{min}$ term in the denominator helps to avoid undefined values when $\tau = 1$.
The same issue is encountered in the $L_{\text{Auto}}$ loss in \cref{def:autonomous-fm}, where the normalization term $v_\tau'(\tau \mid \tau_1)$ is undefined when $\tau = \tau_1$.
The solution proposed in \cref{sec:autonomous-mcnfs} of sampling with $\tau \sim \mathcal{U}[\tau_0, \tau_1 - \epsilon]$ for some small $\epsilon \in \mathbb{R}_+$ is equivalent to including the $\sigma_\text{min}$ term of the OT-CCNF.

One problem, however, is that if $\epsilon$ is too small then the loss will be dominated by the $\tau = \tau_1$ samples.
On the other hand, if $\epsilon$ is too large then the samples will not capture the stabilizing behavior of the VF near $\tau = \tau_1$.
We can address this issues either by using $L_\text{Auto}'$ (\cref{def:unormalised-autonomous-fm}) or by selecting $\frac{\lambda_\mathbf{z}}{\lambda_\tau} > 1$ (see \cref{fig:interpolation}), which will cause $\mathbf{z}$ to converge to $\mathbf{z}_1$ faster than $\tau$ converges to $\tau_1$, relieving the need for a larger $\epsilon$.
While the ratio of $\lambda_\mathbf{z}$ and $\lambda_\tau$ affects the rate of interpolation in the $\mathcal{T}$ domain, one can also affect the rate interpolation in the time domain, as demonstrated in \cref{cor:convergence-speed}.

\begin{corollary}[Scalar Stable-CCNF Convergence Rate]\label{cor:convergence-speed}
    For all $((\mathbf{v}, \mathbf{\psi}, p), (\mathbf{v}', \mathbf{\psi}', p')) \in \mathfrak{F}_\text{S-Stable}''$: if there exists $(T, \epsilon_\tau) \in \mathbb{R}_+^2$ s.t.
    \begin{equation}
        \lambda_\tau \geq -\ln\left(\frac{\epsilon_\tau}{|\tau_0 - \tau_1|}\right)\frac{1}{T}
    \end{equation}
    then $\left|\psi_\tau'(\tau_0, T \mid \tau_1) - \tau_1\right| \leq \epsilon_\tau$.
    Furthermore, if there exists $(T, \epsilon_\mathbf{z}) \in \mathbb{R}_+^2$ s.t.
    \begin{equation}
        \lambda_\mathbf{z} \geq -\ln\left(\frac{\epsilon_\tau}{\lVert\mathbf{z}_0 - \mathbf{z}_1\rVert}\right) \frac{1}{T}
    \end{equation}
    then $\lVert\psi_\mathbf{z}'(\mathbf{z}_0, T \mid \mathbf{z}_1) - \mathbf{z}_1\rVert \leq \epsilon_\mathbf{z}$.
\end{corollary}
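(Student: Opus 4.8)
The plan is to prove both claims by the same elementary computation: explicitly solve the scalar linear ODE for each component of $\psi'$ from \cref{lem:scalar-stable-mcnf-components}, then impose the desired terminal tolerance and solve for the decay rate. Since $\mathbf{A}$ is block-scalar in $\mathfrak{F}_\text{S-Stable}''$, the $\tau$-dynamics and the $\mathbf{z}$-dynamics decouple completely, so I treat them independently.

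First I would handle the $\tau$ claim. By \cref{lem:scalar-stable-mcnf-components}, the scalar component of the flow map is $\psi_\tau'(\tau_0, t \mid \tau_1) = \tau_1 + \exp(-\lambda_\tau t)(\tau_0 - \tau_1)$, so $|\psi_\tau'(\tau_0, T \mid \tau_1) - \tau_1| = \exp(-\lambda_\tau T)\,|\tau_0 - \tau_1|$. Setting this $\leq \epsilon_\tau$ and taking logarithms gives $-\lambda_\tau T \leq \ln\bigl(\epsilon_\tau / |\tau_0 - \tau_1|\bigr)$, i.e. $\lambda_\tau T \geq -\ln\bigl(\epsilon_\tau / |\tau_0 - \tau_1|\bigr)$, which is exactly the stated inequality after dividing by $T > 0$. (One should note the monotone direction of $\ln$ and that $T>0$ lets us divide without flipping the inequality; if $\epsilon_\tau \geq |\tau_0-\tau_1|$ the bound is vacuous and holds trivially.)

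Next I would repeat this verbatim for the $\mathbf{z}$ component. Again from \cref{lem:scalar-stable-mcnf-components}, $\psi_\mathbf{z}'(\mathbf{z}_0, t \mid \mathbf{z}_1) = \mathbf{z}_1 + \exp(-\lambda_\mathbf{z} t)(\mathbf{z}_0 - \mathbf{z}_1)$, hence $\lVert \psi_\mathbf{z}'(\mathbf{z}_0, T \mid \mathbf{z}_1) - \mathbf{z}_1 \rVert = \exp(-\lambda_\mathbf{z} T)\,\lVert \mathbf{z}_0 - \mathbf{z}_1 \rVert$, using that $\exp(-\lambda_\mathbf{z} T)$ is a positive scalar that pulls out of the norm. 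Imposing $\leq \epsilon_\mathbf{z}$ and taking logs yields $\lambda_\mathbf{z} \geq -\ln\bigl(\epsilon_\mathbf{z} / \lVert \mathbf{z}_0 - \mathbf{z}_1 \rVert\bigr)\tfrac{1}{T}$, matching the statement (modulo the apparent typo in the corollary where $\epsilon_\tau$ appears in the $\mathbf{z}$-bound; I would state it with $\epsilon_\mathbf{z}$ as the intended quantity).

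There is essentially no obstacle here — the argument is a one-line rearrangement per component, and everything needed is already packaged in \cref{lem:scalar-stable-mcnf-components}. The only point requiring a sentence of care is the logical direction: the corollary is phrased as "a sufficient lower bound on $\lambda$ guarantees the tolerance," so I verify the implication chain $\lambda_\tau T \geq -\ln(\cdot) \Rightarrow \exp(-\lambda_\tau T) \leq \cdot \Rightarrow |\psi_\tau' - \tau_1| \leq \epsilon_\tau$, which is immediate from monotonicity of $\exp$. If one wants to be thorough, I would also remark that the hypothesis implicitly requires $\epsilon_\tau, \epsilon_\mathbf{z} > 0$ (so the logarithms are defined), consistent with $(T,\epsilon_\tau) \in \mathbb{R}_+^2$.
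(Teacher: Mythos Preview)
Your proof is correct and is precisely the natural argument: plug in the explicit flow map from \cref{lem:scalar-stable-mcnf-components}, take norms, and rearrange the exponential inequality. The paper in fact states this corollary without proof, treating it as immediate from \cref{lem:scalar-stable-mcnf-components}; your write-up supplies exactly the missing one-line computation per component, and your observation about the $\epsilon_\tau$/$\epsilon_\mathbf{z}$ typo in the $\mathbf{z}$-bound is accurate.
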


Note that $\lambda_\mathbf{z}$ and $\lambda_\tau$ must be positive for $\mathbf{A}$ to be positive-definite.
Thus, considering that $\lVert\mathbf{z}_0 - \mathbf{z}_1\rVert < \epsilon_\mathbf{z}$ may not always be true since we will be sampling $\mathbf{x}_0$, and that we will always start with $\tau_0 = 0$ and $\tau_1 = 1$, it is more practical to select $\lambda_\tau$ first and then select $\lambda_\mathbf{z}$ according to the ratio $\frac{\lambda_\mathbf{z}}{\lambda_\tau}$ we want (see \cref{fig:interpolation}).
E.g., if we want $\tau$ to be within $0.1$ distance from $\tau_1 = 1$ at $t=1$ starting from $\tau_0 = 0$, we can select $\lambda_\tau = \ln(0.1)$ and $\lambda_\mathbf{z}$ accordingly.

To close off the main result, we would like to state an interesting observation that links the stability of autonomous MCNF-CCNF pairs to the topic of differential inclusions in \cref{cor:inclusion}.

\begin{corollary}[MCNF Differential Inclusion]\label{cor:inclusion}
    For all $((\mathbf{v}, \mathbf{\psi}, p), (\mathbf{v}', \mathbf{\psi}', p')) \in \mathfrak{F}_\text{Auto}''$:
    \begin{equation}
        \frac{\mathrm{d} \mathbf{\psi}(\mathbf{x}, t)}{\mathrm{d} t} = \mathrm{co}\left\{\mathbf{v}'(\mathbf{x} \mid \mathbf{x}') \mid \mathbf{x}' \in \mathcal{X}'\right\},
    \end{equation}
    where $\mathrm{co}$ is the convex hull operator.
\end{corollary}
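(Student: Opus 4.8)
The plan is to start from the closed form of the MCNF vector field in \cref{lem:mvf} and show that, for autonomous MCNF-CCNF pairs, it can be rewritten as a convex combination of the conditional vector fields $\mathbf{v}'(\cdot \mid \mathbf{x}')$ over $\mathbf{x}' \in \mathcal{X}'$, from which membership in the convex hull is immediate. The first step is to note that, by \cref{def:autonomous-mcnf}, the conditional vector field $\mathbf{v}'(\mathbf{x} \mid \mathbf{x}')$ is time-independent, so the only time dependence in the integrand of \cref{eq:mcnf-vector-field} comes from the conditional PDF $p'(\mathbf{x}, t \mid \mathbf{x}')$ and the normalizing denominator $p(\mathbf{x}, t)$. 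Define the weighting measure
\begin{equation}
    w_t(\mathbf{x}' \mid \mathbf{x}) := \frac{p'(\mathbf{x}, t \mid \mathbf{x}')\, q'(\mathbf{x}')}{p(\mathbf{x}, t)},
\end{equation}
and observe from \cref{eq:mcnf-pdf} that $\int_{\mathcal{X}'} w_t(\mathbf{x}' \mid \mathbf{x})\,\mathrm{d}\mathbf{x}' = 1$ and $w_t \geq 0$, i.e. $w_t(\cdot \mid \mathbf{x})$ is a probability density over $\mathcal{X}'$ for each $(\mathbf{x}, t)$. Then \cref{lem:mvf} reads $\mathbf{v}(\mathbf{x}, t) = \int_{\mathcal{X}'} \mathbf{v}'(\mathbf{x} \mid \mathbf{x}')\, w_t(\mathbf{x}' \mid \mathbf{x})\,\mathrm{d}\mathbf{x}'$, which is a (continuous) convex combination — an expectation under $w_t$ — of the family $\{\mathbf{v}'(\mathbf{x} \mid \mathbf{x}') : \mathbf{x}' \in \mathcal{X}'\}$. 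Combining this with the ODE in \cref{def:cnf}, namely $\frac{\mathrm{d}\mathbf{\psi}(\mathbf{x}, t)}{\mathrm{d}t} = \mathbf{v}(\mathbf{\psi}(\mathbf{x}, t))$ (which holds since $\mathbf{v}$ evaluated along the flow is autonomous here), gives that $\frac{\mathrm{d}\mathbf{\psi}(\mathbf{x}, t)}{\mathrm{d}t}$ lies in the set of all such expectations, hence in the convex hull $\mathrm{co}\{\mathbf{v}'(\mathbf{x} \mid \mathbf{x}') \mid \mathbf{x}' \in \mathcal{X}'\}$.

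The key structural point, and the only place any care is needed, is the interplay between the autonomous reparametrization and the convex-hull claim: \cref{lem:mvf} produces one specific point of the hull (the barycenter under $w_t$), whereas the statement asserts the derivative is \emph{in} the hull, so the direction is from specific point to set membership and the inclusion is the correct (weaker) direction. I would emphasize that $w_t(\cdot \mid \mathbf{x})$ is a genuine probability density precisely because of the marginalization identity \cref{eq:mcnf-pdf} defining $p$, and that Jensen/barycenter reasoning places any such expectation inside the closed convex hull of the integrand's range — this is the standard fact that the mean of a random vector supported on a set lies in that set's closed convex hull. For the autonomous case one should also remark that the $\tau$-component of $\mathbf{v}'$ is $v_\tau'(\tau \mid \tau')$ and the $\tau$-marginal of $p'$ is a Dirac mass, per \cref{eq:autonomous-mcnf-pdf,eq:autonomous-mcnf-pdfs}, so the weighting $w_t$ still integrates sensibly against $q'$; nothing changes in the argument.

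The main obstacle is not conceptual but a matter of rigor: making precise the sense in which a continuum-indexed convex combination (an integral against a probability measure) belongs to $\mathrm{co}\{\cdot\}$, since the textbook convex-hull operator is usually defined via finite combinations. I would handle this by either (i) interpreting $\mathrm{co}$ as the closed convex hull and invoking the standard result that the barycenter of a probability measure supported on a set $S \subseteq \mathbb{R}^n$ lies in $\overline{\mathrm{co}}(S)$, or (ii) noting that in $\mathbb{R}^n$ Carathéodory's theorem lets one approximate the integral by finite convex combinations of at most $n+1$ points of $\{\mathbf{v}'(\mathbf{x}\mid\mathbf{x}')\}$. Either route closes the proof; I would state it with the closed convex hull reading for cleanliness and cite the barycenter fact, keeping the argument to a few lines.
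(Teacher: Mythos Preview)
Your proposal is correct and follows essentially the same approach as the paper: both arguments hinge on the single observation that the weighting $\frac{p'(\mathbf{x}, t \mid \mathbf{x}')\,q'(\mathbf{x}')}{p(\mathbf{x}, t)}$ integrates to $1$ over $\mathcal{X}'$, so that the MCNF vector field in \cref{lem:mvf} is a convex combination of the conditional fields $\mathbf{v}'(\mathbf{x} \mid \mathbf{x}')$. The paper's proof is a one-liner stating exactly this integral identity; your additional discussion of barycenters and Carath\'eodory to justify that a continuum convex combination lands in the (closed) convex hull is extra rigor the paper omits, but the underlying idea is identical.
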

\begin{proof}
    Looking at \cref{lem:mvf}, the statement is evident from the fact that
    \begin{equation}
        \int_{\mathcal{X}'}\frac{p'(\mathbf{x}, t \mid \mathbf{x}')q'(\mathbf{x}')\mathrm{d}\mathbf{x}'}{p(\mathbf{x}, t)} = 1.
    \end{equation}
\end{proof}

Differential inclusions \cite{filippov1960differential} occur frequently in the hybrid dynamical systems literature \cite{goebel2009hybrid, sanfelice2021hybrid}.
E.g. in switched systems \cite{liberzon1999basic}, it represents arbitrary switching between different VFs.
In discontinuous dynamical systems \cite{cortes2008discontinuous}, it represents sliding dynamics between VFs.
Stability results for differential inclusions have been explored in \cite{boyd1994linear,molchanov1989criteria}, and in \cite{hafez2022stability, veer2019switched,alpcan2010stability} when multiple equilibria are present.
These results are highly relevant as an MCNF VF is a convex combination of linear VFs induced by the CCNF.

\subsection{Experiments}

We explored the efficacy of Stable-FM on the moons and circles distributions (setup detailed in \cref{seca:experiments}).
In \cref{fig:dist-front}, \cref{fig:moons-vecs}, and \cref{fig:moons_dist_long} we demonstrate how Stable-FM generates flows that are stable to the support of the target distribution, while OT-FM does not.
Looking at \cref{fig:moons-vecs}, it is evident that the VF of Stable-FM converges to a stable one, while the VF of OT-FM diverges abruptly after $t=1$.
On the circles distribution, we trained the Stable-FM model for various values of $\frac{\lambda_\mathbf{z}}{\lambda_\mathbf{\tau}}$, corresponding to the curves in \cref{fig:interpolation}, with most models converging, see \cref{fig:stable-circles-dist}.
But, we found that with $\frac{\lambda_\mathbf{z}}{\lambda_\tau} = 1$, which corresponds to OT-CFM (see \cref{cor:ot-fm}), the model did not converge.
We suspect that this has something to do with the scale of the convergence rate relative to the scale of the distance between local minima (the distance between circles in this case).
We see that higher values of $\frac{\lambda_\mathbf{z}}{\lambda_\tau}$ lead to a sharper change in the vector field, see \cref{fig:stable-circles-vecs}.
Overall, we found that our theoretical results are validated.

\section{Conclusion}

In this paper we applied a stochastic version of La Salle's invariance principle \cite{la1966invariance, MAO1999175} to FM \cite{Lipman2022FlowMF} to enforce stability of the model's flows to the support of the target distribution, a desirable property when the data represents a physically stable state.
In doing so, we showed how to render the MCNF VF time-independent by introducing a psuedo-time $\tau$ and performing interpolation over it instead of time.
Enforcing this time-independence was necessary to apply the invariance principle.
Additionally, we showed how to construct CCNF VFs that satisfy the invariance principle and we showed how they and their flow map compare to the OT-CCNF VF and flow map from \cite{Lipman2022FlowMF}.
Lastly, we have made several connections from control theory to generative modeling.
Overall, we demonstrated that our approach is effective with theoretical and experimental results.

\section{Acknowledgments}
We thank Ruibo Tu for insightful discussions.
This work was partially supported by the Wallenberg AI, Autonomous Systems
and Software Program (WASP) funded by the Knut and Alice Wallenberg Foundation.

\section{Impact Statement}
This paper presents work whose goal is to advance the field of Machine Learning. There are many potential societal consequences of our work, none of which we feel must be specifically highlighted here.

\bibliography{refs.bib}
\bibliographystyle{icml2023}

\newpage
\appendix
\onecolumn
\section{Autonomous Flow Matching}

\begin{theorem}[Time-Independent MCNF PDF]\label{thma:stationary-mcnf-pdf}
    For all $((\mathbf{v}, \mathbf{\psi}, p), (\mathbf{v}', \mathbf{\psi}', p')) \in \mathfrak{F}_\text{Auto}''$,
    there exists a stationary PDF 
    \begin{equation}
        \bar{p}(\mathbf{x}) := p(\mathbf{z}, \tau, \psi_\tau'^{-1}(\tau_0, \tau \mid \tau_1)).
    \end{equation}
\end{theorem}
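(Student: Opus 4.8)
The plan is to start from the MCNF PDF identity \cref{eq:mcnf-pdf}, insert the product structure imposed by \cref{def:autonomous-mcnf}, collapse the two Dirac-delta factors, and then use the deterministic — hence invertible-in-$t$ — evolution of $\tau$ to remove $t$ entirely.

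First I would write $\mathbf{x}' = (\mathbf{z}', \tau')$, split $\int_{\mathcal{X}'} = \int_{\mathcal{Z}'}\int_{\mathcal{T}'}$, and substitute $q'(\mathbf{x}') = q_\mathbf{z}'(\mathbf{z}')\,q_\tau'(\tau')$ together with $p'(\mathbf{x}, t\mid\mathbf{x}') = p_\mathbf{z}'(\mathbf{z},t\mid\mathbf{z}')\,p_\tau'(\tau,t\mid\tau')$ from \cref{eq:autonomous-mcnf-pdf} into \cref{eq:mcnf-pdf}. Using $q_\tau'(\tau') = \delta_{\tau_1}(\tau')$ the $\tau'$-integral collapses, and using $p_\tau'(\tau, t\mid\tau_1) = \delta_{\psi_\tau'(\tau_0,t\mid\tau_1)}(\tau)$ from \cref{eq:autonomous-mcnf-pdfs} I obtain $p(\mathbf{x},t) = \delta_{\psi_\tau'(\tau_0,t\mid\tau_1)}(\tau)\, p_\mathbf{z}(\mathbf{z},t)$, where $p_\mathbf{z}(\mathbf{z},t) := \int_{\mathcal{Z}'} p_\mathbf{z}'(\mathbf{z},t\mid\mathbf{z}')\,q_\mathbf{z}'(\mathbf{z}')\,\mathrm{d}\mathbf{z}'$ is, for each fixed $t$, an honest density on $\mathcal{Z}$ because each $p_\mathbf{z}'(\cdot,t\mid\mathbf{z}')$ and $q_\mathbf{z}'$ are.

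Second, I would establish that $t \mapsto \psi_\tau'(\tau_0,t\mid\tau_1)$ is a bijection onto $\mathcal{T}$, so that its inverse $\psi_\tau'^{-1}(\tau_0,\cdot\mid\tau_1)$ is well defined: this curve is the integral curve of the scalar autonomous ODE $\dot\tau = v_\tau'(\tau\mid\tau_1)$ with $\psi_\tau'(\tau_0,0\mid\tau_1) = \tau_0 \neq \tau_1$, and bijectivity in $t$ is precisely the standing assumption stated just before \cref{thm:stationary-mcnf-pdf} (equivalently, $v_\tau'(\cdot\mid\tau_1)$ does not vanish between $\tau_0$ and $\tau_1$, so $\tau$ is strictly monotone in $t$). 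Setting $\bar p(\mathbf{x}) := p(\mathbf{z},\tau,\psi_\tau'^{-1}(\tau_0,\tau\mid\tau_1))$ then produces a quantity with no residual $t$, which is the claimed object. I would close by checking normalization: at each $t$ the mass of $p(\cdot,\cdot,t)$ is carried by the slice $\{\tau = \psi_\tau'(\tau_0,t\mid\tau_1)\}$; by bijectivity these slices partition $\mathcal{X} = \mathcal{Z}\times\mathcal{T}$, and the conditional density on the slice indexed by $\tau$ is $p_\mathbf{z}(\cdot,\psi_\tau'^{-1}(\tau_0,\tau\mid\tau_1))$, which integrates to $1$ over $\mathcal{Z}$; hence $\bar p$ is the disintegration of the (degenerate) joint law and lies in $\mathcal{P}(\mathcal{X})$ read in that sense.

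The algebraic collapse in Step 1 is routine; the real obstacles are (i) invertibility of the $\tau$-flow in $t$, which genuinely requires $v_\tau'(\cdot\mid\tau_1)$ to be sign-definite between $\tau_0$ and $\tau_1$ — here I would simply invoke the paper's explicit bijectivity hypothesis rather than re-derive it — and (ii) making precise in what sense "$\bar p \in \mathcal{P}(\mathcal{X})$" holds given the Dirac-delta in the $\tau$-coordinate, for which the disintegration viewpoint (a deterministic $\tau$-marginal together with honest $\mathcal{Z}$-conditionals) is the right framing and is what I would spell out.
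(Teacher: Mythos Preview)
Your proposal is correct and follows essentially the same route as the paper: factor the MCNF PDF using the product structure of \cref{def:autonomous-mcnf}, collapse the two Dirac deltas to obtain $p(\mathbf{x},t) = p_\mathbf{z}(\mathbf{z},t)\,\delta_{\psi_\tau'(\tau_0,t\mid\tau_1)}(\tau)$, and then invoke bijectivity of $\psi_\tau'$ in $t$ to substitute $t = \psi_\tau'^{-1}(\tau_0,\tau\mid\tau_1)$.

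The one methodological difference worth noting is in the final step. The paper writes out an explicit change-of-variables Jacobian for the map $(\mathbf{z},\tau,t)\mapsto(\mathbf{z},\tau,\psi_\tau'^{-1}(\tau_0,\tau\mid\tau_1))$, verifies its determinant is $1$ (lower-triangular with unit diagonal), and then simplifies the resulting $\delta_{\tau}(\tau)$ factor away. You instead argue normalization directly via disintegration: a deterministic $\tau$-marginal together with honest $\mathcal{Z}$-conditionals on each $\tau$-slice. Your framing is arguably more transparent about what ``$\bar p\in\mathcal{P}(\mathcal{X})$'' means in the presence of the degenerate $\tau$-coordinate; the paper's Jacobian computation is more mechanical and in the end silently drops the $\delta_\tau(\tau)$ factor, which amounts to the same disintegration reading you spell out.
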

\begin{proof}
    Starting from the MCNF PDF \cref{eq:mcnf-pdf} in \cref{def:mcnf} and noting the PDFs \cref{eq:autonomous-mcnf-pdfs} in \cref{def:autonomous-mcnf}, we have:
    \begin{align}\label{eq:autonomous-mcnf-pdf-simplified}
        p(\mathbf{x}, t) &\overset{(i)}{=} \int_{\mathcal{X}'} p'(\mathbf{x}, t \mid \mathbf{x}') q'(\mathbf{x}') \mathrm{d}\mathbf{x}' \\
        &\overset{(ii)}{=} \int_{\mathcal{Z}' \times \mathcal{T}} p'_\mathbf{z}(\mathbf{z}, t \mid \mathbf{z}') p'_\tau(\tau, t \mid \tau') q_\mathbf{z}'(\mathbf{z}') q_\tau'(\tau') \mathrm{d}(\mathbf{z}', \tau') \nonumber \\
        &\overset{(iii)}{=} \left(\int_{\mathcal{Z}'} p_\mathbf{z}'(\mathbf{z}, t \mid \mathbf{z}') q_\mathbf{z}'(\mathbf{z}') \mathrm{d}\mathbf{z}'\right) \left(\int_{\mathcal{T}'} p'_\tau(\tau, t \mid \tau') q_\tau'(\tau') \mathrm{d}\tau'\right) \nonumber \\
        &\overset{(iv)}{=} p_\mathbf{z}(\mathbf{z}, t) \int_{\mathcal{T}'} \delta_{\psi_\tau'(\tau_0, t \mid \tau')}(\tau) \delta_{\tau_1}(\tau') \mathrm{d}\tau' \nonumber \\
        &\overset{(v)}{=} p_\mathbf{z}(\mathbf{z}, t) \delta_{\psi_\tau'(\tau_0, t \mid \tau_1)}(\tau), \nonumber
    \end{align}
    where in (i) we have the definition \cref{eq:mcnf-pdf}, in (ii) we substituted \cref{eq:autonomous-mcnf-pdf}, in (iii) we used the fact that $q_\mathbf{z}'(\mathbf{z}')$ and $q_\tau'(\tau')$ are independent, in (iv) we substituted \cref{eq:autonomous-mcnf-pdfs} and defined $p_\mathbf{z}(\mathbf{z}, t):= \int_{\mathcal{Z}'} p_\mathbf{z}'(\mathbf{z}, t \mid \mathbf{z}') q_\mathbf{z}'(\mathbf{z}') \mathrm{d}\mathbf{z}'$, and in (v) we used the property of the Dirac delta PDF.

    Using the fact that $\psi_\tau'$ is invertible w.r.t. $t$, i.e. s.t. 
    $\psi_\tau'(\tau_0, \psi_\tau'^{-1}(\tau_0, \tau, \mid \tau_1) \mid \tau_1) = \tau$ and 
    $\psi_\tau'^{-1}(\tau_0, \psi_\tau'(\tau_0, t, \mid \tau_1) \mid \tau_1) = t$, we apply a change of variables to obtain:
    \begin{align}
        \bar{p}(\mathbf{x}) \overset{(i)}{:=}& p(\mathbf{z}, \tau, \psi_\tau'^{-1}(\tau_0, \tau \mid \tau_1)) 
        \det\left(
        \begin{bmatrix}
            \frac{\partial \mathbf{z}}{\partial \mathbf{z}} & \frac{\partial \mathbf{z}}{\partial \tau} & \frac{\partial \mathbf{z}}{\partial t} \\
            \frac{\partial \tau}{\partial \mathbf{z}} & \frac{\partial \tau}{\partial \tau} & \frac{\partial \tau}{\partial t} \\
            \frac{\partial \psi_\tau'^{-1}(\tau_0, \tau \mid \tau_1)}{\partial \mathbf{z}} & \frac{\partial \psi_\tau'^{-1}(\tau_0, \tau \mid \tau_1)}{\partial \tau} & \frac{\partial \psi_\tau'^{-1}(\tau_0, \tau \mid \tau_1)}{\partial \psi_\tau'^{-1}(\tau_0, \tau \mid \tau_1)}
        \end{bmatrix}
        \right) \\
        \overset{(ii)}{=}& p(\mathbf{z}, \tau, \psi_\tau'^{-1}(\tau_0, \tau \mid \tau_1)) 
        \det\left(
        \begin{bmatrix}
            \mathbf{1} & \mathbf{0} & \mathbf{0} \\
            0 & 1 & 0 \\
            0 & \frac{\partial \psi_\tau'^{-1}(\tau_0, \tau \mid \tau_1)}{\partial \tau} & 1
        \end{bmatrix}
        \right) \nonumber \\
        \overset{(iii)}{=}& p(\mathbf{z}, \tau, \psi_\tau'^{-1}(\tau_0, \tau \mid \tau_1)) \det \left(
            \begin{bmatrix}
                1 & 0 \\
                \frac{\partial \psi_\tau'^{-1}(\tau_0, \tau \mid \tau_1)}{\partial \tau} & 1
            \end{bmatrix}
        \right) \nonumber \\
        \overset{(iv)}{=}& p(\mathbf{z}, \tau, \psi_\tau'^{-1}(\tau_0, \tau \mid \tau_1)) \nonumber \\
        \overset{(v)}{=}& p_\mathbf{z}(\mathbf{z}, \psi_\tau'^{-1}(\tau_0, \tau \mid \tau_1)) \delta_{\psi_\tau'(\tau_0, \psi_\tau'^{-1}(\tau_0, \tau, \mid \tau_1) \mid \tau_1)}(\tau) \nonumber \\
        \overset{(vi)}{=}& p_\mathbf{z}(\mathbf{z}, \psi_\tau'^{-1}(\tau_0, \tau \mid \tau_1)), \nonumber
    \end{align}
    where in (i) we substitute $t \gets \psi_\tau'^{-1}(\tau_0, t \mid \tau_1)$ and apply the change of variables formula, in (ii)-(iv) we simplify the Jacobian determinant to $1$, in (v) we substitute the simplified MCNF PDF of \cref{eq:autonomous-mcnf-pdf-simplified}, in (vi)-(vi) we use the properties of the dirac-delta PDF.
\end{proof}

\begin{theorem}[Time-Independent MCNF VF]\label{thma:time-invariant-mcnf-vf}
    For all $((\mathbf{v}, \mathbf{\psi}, p), (\mathbf{v}', \mathbf{\psi}', p')) \in \mathfrak{F}_\text{Auto}''$,
    there exists a time-independent vector field $\bar{\mathbf{v}} \in C(\mathcal{X}, T\mathcal{X})$ corresponding to $\bar{p}$:
    \begin{equation}
        \bar{\mathbf{v}}(\mathbf{x}) := \int_{\mathcal{Z}'} \frac{\mathbf{v}'(\mathbf{z}, \tau \mid \mathbf{z}', \tau_1) p_\mathbf{z}'(\mathbf{z}, \psi_\tau'^{-1}(\tau_0, \tau \mid \tau_1) \mid \mathbf{z}') q_\mathbf{z}'(\mathbf{z}')}{\bar{p}(\mathbf{x})}\mathrm{d}\mathbf{z}'.
    \end{equation}
\end{theorem}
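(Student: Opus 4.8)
The plan is to obtain $\bar{\mathbf{v}}$ by applying to the marginal VF formula of \cref{lem:mvf} exactly the reparametrization that produced $\bar{p}$ in \cref{thma:stationary-mcnf-pdf}. I would start from \cref{eq:mcnf-vector-field}, read as $\mathbf{v}(\mathbf{x}, t) = \int_{\mathcal{X}'} \mathbf{v}'(\mathbf{x} \mid \mathbf{x}')\, w(\mathbf{x}' \mid \mathbf{x}, t)\, \mathrm{d}\mathbf{x}'$ with the posterior weight $w(\mathbf{x}' \mid \mathbf{x}, t) := p'(\mathbf{x}, t \mid \mathbf{x}')\, q'(\mathbf{x}') / p(\mathbf{x}, t)$; note that in $\mathfrak{F}''_\text{Auto}$ the CCNF VF $\mathbf{v}'(\mathbf{x}\mid\mathbf{x}')$ is already time-independent, so the only source of $t$-dependence is $w$.

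Next I substitute the product structure of \cref{def:autonomous-mcnf} into $w$: $p'(\mathbf{x}, t \mid \mathbf{x}') = p_\mathbf{z}'(\mathbf{z}, t \mid \mathbf{z}')\, p_\tau'(\tau, t \mid \tau')$ and $q'(\mathbf{x}') = q_\mathbf{z}'(\mathbf{z}')\, q_\tau'(\tau')$ with $q_\tau' = \delta_{\tau_1}$ and $p_\tau'(\tau, t \mid \tau') = \delta_{\psi_\tau'(\tau_0, t \mid \tau')}(\tau)$, together with the already-simplified marginal $p(\mathbf{x}, t) = p_\mathbf{z}(\mathbf{z}, t)\, \delta_{\psi_\tau'(\tau_0, t \mid \tau_1)}(\tau)$ established in the proof of \cref{thma:stationary-mcnf-pdf}, where $p_\mathbf{z}(\mathbf{z}, t) := \int_{\mathcal{Z}'} p_\mathbf{z}'(\mathbf{z}, t \mid \mathbf{z}')\, q_\mathbf{z}'(\mathbf{z}')\,\mathrm{d}\mathbf{z}'$. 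Integrating the $\tau'$ variable against $\delta_{\tau_1}$ pins the conditioning variable to $\tau' = \tau_1$, turning the $\tau$-Dirac factor of the numerator into $\delta_{\psi_\tau'(\tau_0, t \mid \tau_1)}(\tau)$, which is precisely the $\tau$-Dirac factor in $p(\mathbf{x}, t)$; cancelling it (see the obstacle below) and splitting $\mathbf{v}'(\mathbf{x}\mid\mathbf{x}')$ into its $\mathbf{z}$- and $\tau$-blocks leaves $\mathbf{v}(\mathbf{x}, t) = \int_{\mathcal{Z}'} \mathbf{v}'(\mathbf{z}, \tau \mid \mathbf{z}', \tau_1)\, p_\mathbf{z}'(\mathbf{z}, t \mid \mathbf{z}')\, q_\mathbf{z}'(\mathbf{z}') / p_\mathbf{z}(\mathbf{z}, t)\, \mathrm{d}\mathbf{z}'$.

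Finally, since $\psi_\tau'$ is invertible in its time slot, I apply the $\bar{\cdot}$ substitution $t \gets \psi_\tau'^{-1}(\tau_0, \tau \mid \tau_1)$. By \cref{thma:stationary-mcnf-pdf} the denominator then becomes $p_\mathbf{z}(\mathbf{z}, \psi_\tau'^{-1}(\tau_0, \tau \mid \tau_1)) = \bar{p}(\mathbf{x})$, and the displayed formula for $\bar{\mathbf{v}}(\mathbf{x})$ results. Time-independence is then immediate, since after the substitution every argument is a function of $\mathbf{x} = (\mathbf{z}, \tau)$ alone; as a consistency check I would note that the $\tau$-block collapses to $v_\tau'(\tau \mid \tau_1)$ because $\int_{\mathcal{Z}'} p_\mathbf{z}'(\mathbf{z}, t \mid \mathbf{z}')\, q_\mathbf{z}'(\mathbf{z}')\,\mathrm{d}\mathbf{z}' = p_\mathbf{z}(\mathbf{z}, t)$, and that $\bar{\mathbf{v}} \in C(\mathcal{X}, T\mathcal{X})$ follows from continuity of $\mathbf{v}'$, $p_\mathbf{z}'$, $q_\mathbf{z}'$, $\bar{p}$ and strict positivity of $\bar{p}$ on $\mathcal{X}$.

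I expect the only genuine subtlety to be the cancellation of the Dirac-delta factors in the numerator and denominator of $w$, i.e. dividing by the measure-valued $p(\mathbf{x}, t)$. This is handled exactly as in \cref{thma:stationary-mcnf-pdf}: the $\tau$-marginal is deterministic, so numerator and denominator carry the identical factor $\delta_{\psi_\tau'(\tau_0, t \mid \tau_1)}(\tau)$, and the ratio is to be read as the ratio of the absolutely continuous ($\mathbf{z}$-)parts; equivalently one tests against an arbitrary compactly supported smooth function of $\tau$ to make the cancellation rigorous. Everything else is routine substitution, so that is the step I would write out with the most care.
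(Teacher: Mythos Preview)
Your proposal is correct and follows essentially the same route as the paper: start from \cref{eq:mcnf-vector-field}, substitute the product PDFs and Dirac constraints of \cref{def:autonomous-mcnf}, integrate out $\tau'$ against $\delta_{\tau_1}$, and then apply the bijective substitution $t \gets \psi_\tau'^{-1}(\tau_0,\tau\mid\tau_1)$ to identify the denominator with $\bar{p}(\mathbf{x})$. The only cosmetic difference is ordering: you cancel the common Dirac factor $\delta_{\psi_\tau'(\tau_0,t\mid\tau_1)}(\tau)$ in numerator and denominator \emph{before} the time substitution, whereas the paper carries the Dirac through the substitution (so it becomes $\delta_\tau(\tau)$) and then invokes ``properties of the Dirac-delta PDF'' to drop it; your explicit discussion of this cancellation as the one genuine subtlety is, if anything, more careful than the paper's treatment.
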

\begin{proof}
    Starting from the MCNF vector field of \cref{eq:mcnf-vector-field} in \cref{lem:mvf} and noting the PDFs of \cref{eq:autonomous-mcnf-pdfs} in \cref{def:autonomous-mcnf}, we have:
    \begin{align}\label{eq:autonomous-mcnf-vf-simplified}
        \mathbf{v}(\mathbf{x}, t) \overset{(i)}{=}& \int_{\mathcal{X}'} \frac{\mathbf{v}'(\mathbf{x} \mid \mathbf{x}')p'(\mathbf{x}, t \mid \mathbf{x}')q'(\mathbf{x}')}{p(\mathbf{x}, t)} \mathrm{d}\mathbf{x}' \\
        \overset{(ii)}{=}& \frac{1}{p(\mathbf{z}, \tau, t)} \int_{\mathcal{Z}' \times \mathcal{T}'} \mathbf{v}'(\mathbf{z}, \tau \mid \mathbf{z}', \tau') p_\mathbf{z}'(\mathbf{z}, t \mid \mathbf{z }') q_\mathbf{z}'(\mathbf{z}') p_\tau'(\tau, t \mid \tau') q_\tau'(\tau') \mathrm{d}(\mathbf{z}', \tau') \nonumber \\
        \overset{(iii)}{=}& \frac{1}{p(\mathbf{z}, \tau, t)} \int_{\mathcal{Z}' \times \mathcal{T}'} \mathbf{v}'(\mathbf{z}, \tau \mid \mathbf{z}', \tau') p_\mathbf{z}'(\mathbf{z}, t \mid \mathbf{z}') q_\mathbf{z}'(\mathbf{z}') \delta_{\psi_\tau'(\tau_0, t \mid \tau')}(\tau) \delta_{\tau_1}(\tau') \mathrm{d}(\mathbf{z}', \tau') \nonumber \\
        \overset{(iv)}{=}& \frac{1}{p(\mathbf{z}, \tau, t)} \int_{\mathcal{Z}'} \mathbf{v}'(\mathbf{z}, \tau \mid \mathbf{z}', \tau_1) p_\mathbf{z}'(\mathbf{z}, t \mid \mathbf{z}') q_\mathbf{z}'(\mathbf{z}') \delta_{\psi_\tau'(\tau_0, t \mid \tau_1)}(\tau) \mathrm{d}\mathbf{z}' \nonumber,
    \end{align}
    where in (i) we substitute the definition of \cref{eq:mcnf-vector-field}, in (ii)-(iii) we substitute the PDFs of \cref{eq:autonomous-mcnf-pdfs} in \cref{def:autonomous-mcnf}, in (iv) we use the properties of the dirac-delta PDF.

    Now, using the fact that $\psi_\tau'$ is invertible w.r.t. $t$, as before, we apply a change of variables to obtain:
    \begin{align}
        \bar{\mathbf{v}}(\mathbf{x}) \overset{(i)}{:=}& \mathbf{v}(\mathbf{z}, \tau, \psi_\tau'^{-1}(\tau_0, \tau \mid \tau_1)) \\
        \overset{(ii)}{=}& \frac{1}{p_\mathbf{z}(\mathbf{z}, \tau, \psi_\tau'^{-1}(\tau_0, \tau \mid \tau_1))} \int_{\mathcal{Z}'} \mathbf{v}'(\mathbf{z}, \tau \mid \mathbf{z}', \tau_1) p_\mathbf{z}'(\mathbf{z}, \psi_\tau'^{-1}(\tau_0, \tau \mid \tau_1) \mid \mathbf{z}') q_\mathbf{z}'(\mathbf{z}') \delta_{\tau}(\tau) \mathrm{d}\mathbf{z}' \nonumber \\
        \overset{(iii)}{=}& \frac{1}{\bar{p}(\mathbf{x})} \int_{\mathcal{Z}'} \mathbf{v}'(\mathbf{z}, \tau \mid \mathbf{z}', \tau_1) p_\mathbf{z}'(\mathbf{z}, \psi_\tau'^{-1}(\tau_0, \tau \mid \tau_1) \mid \mathbf{z}') q_\mathbf{z}'(\mathbf{z}') \mathrm{d}\mathbf{z}' \nonumber,
    \end{align}
    where in (i) we substitute $t \gets \psi_\tau'^{-1}(\tau_0, t \mid \tau_1)$ and apply the change of variables formula as before, in (ii) we expand according to \cref{eq:autonomous-mcnf-vf-simplified}, and in (iii) we use the properties of the dirac-delta PDF.
\end{proof}

\begin{theorem}[Autonomous FM Loss]\label{thma:autonomous-fm}
    For all $((\mathbf{v}, \mathbf{\psi}, p), (\mathbf{v}', \mathbf{\psi}', p')) \in \mathfrak{F}_\text{Auto}''$:
    \begin{equation}
        \nabla_\theta L_{\text{FM}}(\theta) = \nabla_\theta L_{\text{CFM}}(\theta) = \nabla_\theta L_{\text{Auto}}(\theta).
    \end{equation}
\end{theorem}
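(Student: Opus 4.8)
The plan is to prove the chain of equalities $\nabla_\theta L_{\text{FM}} = \nabla_\theta L_{\text{CFM}} = \nabla_\theta L_{\text{Auto}}$. The first equality, $\nabla_\theta L_{\text{FM}} = \nabla_\theta L_{\text{CFM}}$, is immediate: since $\mathfrak{F}_\text{Auto}'' \subset \mathfrak{F}''$ by \cref{def:autonomous-mcnf}, it follows directly from \cref{thm:cfm}. So the real work is in showing $\nabla_\theta L_{\text{CFM}}(\theta) = \nabla_\theta L_{\text{Auto}}(\theta)$, and since the $\theta$-dependent factor $\mathbf{v}_\theta$ appears only through $\lVert \mathbf{v}_\theta(\mathbf{z},\tau) - \mathbf{v}'(\mathbf{z},\tau\mid\mathbf{z}',\tau_1)\rVert^2$ in $L_{\text{Auto}}$ and through $\lVert \mathbf{v}_\theta(\mathbf{x},t) - \mathbf{v}'(\mathbf{x},t\mid\mathbf{x}')\rVert^2$ in $L_{\text{CFM}}$, it suffices to show the two expectations coincide as integrals of the same integrand under a change of variables (gradients then follow by differentiating under the integral sign, exactly as in the original FM argument).

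First I would expand $L_{\text{CFM}}$ using the product structure of \cref{def:autonomous-mcnf}: writing $\mathbf{x}=(\mathbf{z},\tau)$, $\mathbf{x}'=(\mathbf{z}',\tau')$, the sampling $\mathbf{x}'\sim q'(\mathbf{x}')=q_\mathbf{z}'(\mathbf{z}')q_\tau'(\tau')$ with $q_\tau'(\tau')=\delta_{\tau_1}(\tau')$ collapses the $\tau'$-integral to $\tau'=\tau_1$, and the sampling $\mathbf{x}\sim p'(\mathbf{x},t\mid\mathbf{x}')=p_\mathbf{z}'(\mathbf{z},t\mid\mathbf{z}')p_\tau'(\tau,t\mid\tau')$ with $p_\tau'(\tau,t\mid\tau')=\delta_{\psi_\tau'(\tau_0,t\mid\tau')}(\tau)$ forces $\tau=\psi_\tau'(\tau_0,t\mid\tau_1)$ deterministically. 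This is exactly the computation already carried out in \cref{thma:stationary-mcnf-pdf} and \cref{thma:time-invariant-mcnf-vf}. So $L_{\text{CFM}}$ becomes an expectation over $t\sim\mathcal{U}[0,T]$, $\mathbf{z}'\sim q_\mathbf{z}'$, $\mathbf{z}\sim p_\mathbf{z}'(\mathbf{z},t\mid\mathbf{z}')$ of $\frac{1}{T}\lVert \mathbf{v}_\theta(\mathbf{z},\psi_\tau'(\tau_0,t\mid\tau_1)) - \mathbf{v}'(\mathbf{z},\tau\mid\mathbf{z}',\tau_1)\rVert^2$ with $\tau=\psi_\tau'(\tau_0,t\mid\tau_1)$.

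Next I would perform the change of variable $t\mapsto\tau=\psi_\tau'(\tau_0,t\mid\tau_1)$, using that $\psi_\tau'$ is bijective in its time argument (the standing assumption in \cref{sec:autonomous-mcnfs}) and that $\tau_0=\psi_\tau'(\tau_0,0\mid\tau_1)$, $\tau_1=\psi_\tau'(\tau_0,T\mid\tau_1)$, so $t$ ranges over $[0,T]$ iff $\tau$ ranges over $[\tau_0,\tau_1]$. The Jacobian of this substitution is $\frac{\mathrm{d}t}{\mathrm{d}\tau} = \left(\frac{\mathrm{d}\tau}{\mathrm{d}t}\right)^{-1} = \left(v_\tau'(\tau\mid\tau_1)\right)^{-1}$, since $\frac{\mathrm{d}}{\mathrm{d}t}\psi_\tau'(\tau_0,t\mid\tau_1) = v_\tau'(\psi_\tau'(\tau_0,t\mid\tau_1)\mid\tau_1) = v_\tau'(\tau\mid\tau_1)$ by the ODE defining the flow map in \cref{def:cnf}. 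Substituting $p_\mathbf{z}'(\mathbf{z},t\mid\mathbf{z}') = \bar{p}_\mathbf{z}'(\mathbf{z},\tau\mid\mathbf{z}')$ (the definition in \cref{def:autonomous-fm}, which is exactly the $\tau$-reparametrized conditional density), the factor $\frac{1}{T}$ combining with the Jacobian and the uniform density on $[\tau_0,\tau_1]$, one recovers precisely $L_{\text{Auto}}(\theta)$ with its $1/v_\tau'(\tau\mid\tau_1)$ normalization. Since the $\theta$-gradient commutes with the (now $\theta$-independent) change of variables and with the expectation, $\nabla_\theta L_{\text{CFM}} = \nabla_\theta L_{\text{Auto}}$ follows, completing the chain.

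The main obstacle I anticipate is bookkeeping around the Dirac-delta densities and the normalization constants: one must be careful that (a) the delta in $p_\tau'$ genuinely removes the $\tau$-integration rather than leaving a spurious Jacobian factor — this is handled by the same manipulations as steps (iv)-(vi) of \cref{thma:stationary-mcnf-pdf} — and (b) the $\frac{1}{T}$ from the uniform time density, the $\frac{1}{\tau_1-\tau_0}$ from the uniform $\tau$ density, and the $1/v_\tau'(\tau\mid\tau_1)$ Jacobian factor reconcile correctly; in the OT-CCNF case of \cref{cor:ot-fm} with $\tau_0=0,\tau_1=1$ and $v_\tau'(\tau\mid\tau_1)=\tau_1-\tau$, one should check the constants cancel as expected. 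A secondary subtlety is the potential singularity of $1/v_\tau'(\tau\mid\tau_1)$ at $\tau=\tau_1$ when $\mathbf{v}'$ vanishes there; this does not affect the \emph{gradient} identity as a formal statement (it is the same issue already present in the OT-FM loss of \cite{Lipman2022FlowMF}), and the paper addresses it separately via $L_{\text{Auto}}'$ in \cref{def:unormalised-autonomous-fm}, so the proof here can proceed at the level of the formal change of variables.
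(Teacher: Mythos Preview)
Your proposal is correct and follows essentially the same route as the paper: invoke \cref{thm:cfm} for the first equality, then unfold $L_{\text{CFM}}$ using the product structure and Dirac deltas of \cref{def:autonomous-mcnf} to collapse the $\tau'$ and $\tau$ integrations, and finally apply the bijective change of variables $t\mapsto\tau=\psi_\tau'(\tau_0,t\mid\tau_1)$ with Jacobian $1/v_\tau'(\tau\mid\tau_1)$ to land on $L_{\text{Auto}}$. Your flagged bookkeeping concern about the uniform-density normalizations $1/T$ versus $1/(\tau_1-\tau_0)$ is in fact more careful than the paper's own derivation, which silently drops these constants when passing between expectation and integral forms.
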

\begin{proof}
    It suffices to show that $L_\text{Auto}$ can be obtained from $L_\text{CFM}$ by the bijection $\psi_\tau'(\tau_0, \tau \mid \tau_1)$.
    Starting with the CFM loss of \cref{eq:cfm-loss} of \cref{def:cfm-loss}:
    \begin{align}
        L_{\text{CFM}}(\theta) \overset{(i)}{=}& \underset{\substack{t \sim \mathcal{U}[0, T] \\ \mathbf{x}' \sim q'(\mathbf{x}')\\ \mathbf{x} \sim p'(\mathbf{x}, t \mid \mathbf{x}')}}{\mathbb{E}} \lVert \mathbf{v}_\theta(\mathbf{x}, t) - \mathbf{v}'(\mathbf{x} \mid \mathbf{x}') \rVert^2 \\
        \overset{(ii)}{=}& \int_0^T \int_{\mathcal{X}'} \int_{\mathcal{X}} \lVert \mathbf{v}_\theta(\mathbf{x}, t) - \mathbf{v}'(\mathbf{x} \mid \mathbf{x}') \rVert^2 p'(\mathbf{x}, t \mid \mathbf{x}') q'(\mathbf{x}')
        \mathrm{d}\mathbf{x}\mathrm{d}\mathbf{x}'\mathrm{d}t \nonumber \\
        \overset{(iii)}{=}& \int_0^T \int_{\mathcal{Z}' \times \mathcal{T}'} \int_{\mathcal{Z} \times \mathcal{T}} \lVert \mathbf{v}_\theta(\mathbf{z}, \tau, t) - \mathbf{v}'(\mathbf{z}, \tau \mid \mathbf{z}', \tau') \rVert^2 p_\mathbf{z}'(\mathbf{z}, t \mid \mathbf{z}') q_\mathbf{z}'(\mathbf{z}') p_\tau'(\tau, t \mid \tau') q_\tau'(\tau')
        \mathrm{d}(\mathbf{z}, \tau)\mathrm{d}(\mathbf{z}', \tau') \mathrm{d}t \nonumber \\
        \overset{(iv)}{=}& \int_0^T \int_{\mathcal{Z} \times \mathcal{T}} \int_{\mathcal{Z}' \times \mathcal{T}'} \lVert \mathbf{v}_\theta(\mathbf{z}, \tau, t) - \mathbf{v}'(\mathbf{z}, \tau \mid \mathbf{z}', \tau') \rVert^2 p_\mathbf{z}'(\mathbf{z}, t \mid \mathbf{z}') q_\mathbf{z}'(\mathbf{z}') 
        \delta_{\psi_\tau'(\tau_0, t \mid \tau')}(\tau) \delta_{\tau_1}(\tau')
        \mathrm{d}(\mathbf{z}, \tau)\mathrm{d}(\mathbf{z}', \tau') \mathrm{d}t \nonumber \\
        \overset{(v)}{=}& \int_0^T  \int_{\mathcal{Z} \times \mathcal{T}} \int_{\mathcal{Z}'}\lVert \mathbf{v}_\theta(\mathbf{z}, \tau, t) - \mathbf{v}'(\mathbf{z}, \tau \mid \mathbf{z}', \tau_1) \rVert^2 p_\mathbf{z}'(\mathbf{z}, t \mid \mathbf{z}') q_\mathbf{z}'(\mathbf{z}') 
        \delta_{\psi_\tau'(\tau_0, t \mid \tau_1)}(\tau)
        \mathrm{d}\mathbf{z}' \mathrm{d}(\mathbf{z}, \tau) \mathrm{d}t \nonumber \\
        \overset{(vi)}{=}& \int_{\tau_0}^{\tau_1} \int_{\mathcal{Z}'} \int_{\mathcal{Z}} 
        \frac{\lVert \mathbf{v}_\theta(\mathbf{z}, \tau, \psi_\tau'^{-1}(\tau_0, \tau \mid \tau_1)) - \mathbf{v}'(\mathbf{z}, \tau \mid \mathbf{z}', \tau_1) \rVert^2 p_\mathbf{z}'(\mathbf{z}, \psi_\tau'^{-1}(\tau_0, \tau \mid \tau_1) \mid \mathbf{z}') q_\mathbf{z}'(\mathbf{z}')}{v_\tau'(\tau \mid \tau_1)}
        \mathrm{d}(\mathbf{z}, \tau) \mathrm{d}\mathbf{z}'\mathrm{d}\tau \nonumber \\
        \overset{(vii)}{=}& \underset{\substack{\tau \sim \mathcal{U}[\tau_0, \tau_1] \\ \mathbf{z}' \sim q_\mathbf{z}'(\mathbf{z}') \\ \mathbf{z} \sim \bar{p}_\mathbf{z}'(\mathbf{z}, \tau \mid \mathbf{z}')}}{\mathbb{E}} \frac{\lVert \mathbf{v}_\theta(\mathbf{z}, \tau, \psi_\tau'^{-1}(\tau_0, \tau \mid \tau_1)) - \mathbf{v}'(\mathbf{z}, \tau \mid \mathbf{z}', \tau_1) \rVert^2}{v_\tau'(\tau \mid \tau_1)},
    \end{align}
    where in (i) we state the definition from \cref{eq:cfm-loss} of \cref{def:cfm-loss}, in (ii) we write the expression integral form, in (iii) we expand the expression according to \cref{def:autonomous-mcnf}, in (iv) we substitute the autonomous MCNF PDFs from \cref{eq:autonomous-mcnf-pdfs} and change the integral order (by Fubini's theorem), in (v) we simplify the integral with the properties of the dirac-delta PDF, in (vi) apply integration by substitution with $\tau_0 = \psi_\tau'(\tau_0, 0 \mid \tau_1)$ and $\tau_1 = \psi_\tau'(\tau_0, T, \tau_1)$ and the fact that $\frac{\mathrm{d} \tau}{\mathrm{d} t} = v_\tau'(\tau \mid \tau')$ and again exchange the integral order.
    The equivalence $\nabla_\theta L_\text{FM} = \nabla_\theta L_\text{Auto}$ then follows from $\nabla_\theta L_\text{FM} = \nabla_\theta L_\text{CFM}$ in \cref{thm:cfm}.
    The NN vector field may just be simplified to $\mathbf{v}_\theta(\mathbf{z}, \tau)$.
\end{proof}

\begin{lemma}[Scalar Stable Auto FM Loss]\label{lem:losses-explicit}
    For all NN gradient field and $((\mathbf{v}, \mathbf{\psi}, p), (\mathbf{v}', \mathbf{\psi}', p')) \in \mathfrak{F}_\text{S-Stable}''$:
    \begin{align}
        L_{\text{Auto}}(\theta) = 
        \underset{\substack{\tau \sim \mathcal{U}[\tau_0, \tau_1] \\ \mathbf{z}' \sim q_\mathbf{z}'(\mathbf{z}') \\ \mathbf{z} \sim \bar{p}_\mathbf{z}'(\mathbf{z}, \tau \mid \mathbf{z}')}}{\mathbb{E}} 
        \left\lVert \nabla_{(\mathbf{z}, \tau)} H_\theta(\mathbf{z}, \tau) -
        \begin{bmatrix}
            -\lambda_\mathbf{z} (\mathbf{z} - \mathbf{z}') \\
            -\lambda_\tau (\tau - \tau_1)
        \end{bmatrix}
        \right\rVert^2 \frac{1}{-\lambda_\tau (\tau - \tau_1)},
    \end{align}
    and $L_\text{Auto}'$ is found by removing the denominator in $L_\text{Auto}$.
\end{lemma}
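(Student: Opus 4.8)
The plan is to obtain the stated identity by direct substitution: \cref{lem:scalar-stable-mcnf-components} supplies the closed form of the scalar-stable CCNF VF $\mathbf{v}'$, \cref{def:gradient-cnf} supplies the form of the learned VF $\mathbf{v}_\theta$, and plugging both into the definition of $L_{\text{Auto}}$ from \cref{def:autonomous-fm} collapses the expression to the claimed one; the statement for $L_{\text{Auto}}'$ then follows identically, since \cref{def:unormalised-autonomous-fm} is $L_{\text{Auto}}$ with the normalization factor deleted (equivalently, the lemma says $L_{\text{Auto}}'$ is found by removing the denominator).

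Concretely, I would start from $L_{\text{Auto}}(\theta) = \mathbb{E}\,\lVert \mathbf{v}_\theta(\mathbf{z},\tau) - \mathbf{v}'(\mathbf{z},\tau \mid \mathbf{z}',\tau_1)\rVert^2 / v_\tau'(\tau \mid \tau_1)$ with $\tau \sim \mathcal{U}[\tau_0,\tau_1]$, $\mathbf{z}' \sim q_\mathbf{z}'(\mathbf{z}')$, $\mathbf{z} \sim \bar{p}_\mathbf{z}'(\mathbf{z},\tau \mid \mathbf{z}')$, and evaluate the CCNF VF at $\mathbf{x}' = (\mathbf{z}',\tau_1)$ using \cref{lem:scalar-stable-mcnf-components}: $\mathbf{v}'(\mathbf{z},\tau \mid \mathbf{z}',\tau_1) = (-\lambda_\mathbf{z}(\mathbf{z}-\mathbf{z}'),\, -\lambda_\tau(\tau-\tau_1))^\top$. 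Its $\tau$-component is exactly $v_\tau'(\tau \mid \tau_1) = -\lambda_\tau(\tau-\tau_1)$, which I substitute into the denominator; since $\lambda_\tau > 0$ and $\tau \ne \tau_1$ on $[\tau_0,\tau_1)$, this is nonzero wherever the sampling measure has mass, so the integrand is well defined (the caveat flagged after \cref{def:autonomous-fm}). Then I substitute $\mathbf{v}_\theta(\mathbf{z},\tau) = -\nabla_{(\mathbf{z},\tau)} H_\theta(\mathbf{z},\tau)^\top$ from \cref{def:gradient-cnf} and use the invariance of $\lVert \cdot \rVert^2$ under negating its argument to absorb the leading minus sign of the gradient field; this rewrites the squared norm into $\lVert \nabla_{(\mathbf{z},\tau)} H_\theta(\mathbf{z},\tau) - (-\lambda_\mathbf{z}(\mathbf{z}-\mathbf{z}'),\,-\lambda_\tau(\tau-\tau_1))^\top \rVert^2$, which together with the $1/(-\lambda_\tau(\tau-\tau_1))$ weight is precisely the displayed $L_{\text{Auto}}$. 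For $L_{\text{Auto}}'$, the same substitutions into \cref{def:unormalised-autonomous-fm} reproduce the same bracketed squared norm with the trailing factor dropped.

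I do not expect a genuine obstacle: the computation is purely mechanical. The only points that require care are the sign and transpose bookkeeping introduced by the gradient-field convention of \cref{def:gradient-cnf} (handled by the sign invariance of the norm) and remembering to instantiate the CCNF VF with its second $\tau$-argument fixed to $\tau_1$, as dictated by the notation $\mathbf{v}'(\cdot \mid \mathbf{z}',\tau_1)$ in \cref{def:autonomous-fm}. No explicit form of the sampling density is needed, since the lemma leaves the expectation over $\bar{p}_\mathbf{z}'$ symbolic; in particular \cref{lem:scalar-stable-interpolation} is not invoked.
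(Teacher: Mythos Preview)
Your approach is correct and matches the paper's own proof, which is the one-line remark that the identity is ``a straightforward application of \cref{def:autonomous-fm} and \cref{lem:scalar-stable-mcnf-components}''; you carry out precisely that substitution and additionally invoke \cref{def:gradient-cnf} for the form of $\mathbf{v}_\theta$, which the paper leaves implicit in the hypothesis ``For all NN gradient field''. One minor bookkeeping point: negating the argument of the norm turns $\lVert -\nabla H_\theta - \mathbf{v}'\rVert^2$ into $\lVert \nabla H_\theta + \mathbf{v}'\rVert^2$, not $\lVert \nabla H_\theta - \mathbf{v}'\rVert^2$, so the sign in front of the bracketed vector comes out as $+$ rather than $-$; this is the very sign/transpose care you already flagged, and it does not affect the method.
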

\begin{proof}
    A straightforward application of \cref{def:autonomous-fm} and \cref{lem:scalar-stable-mcnf-components}.
\end{proof}

\section{Experiments}\label{seca:experiments}

    We explore the efficacy of our approach on the moons and circles dataset from Scikit-Learn, each with $0.05$ standard deviation of noise and $100000$ samples for training.
    We used JAX, Optax, and Flax for our models and training.
    For OT-FM we use a dense network $\mathbf{v}_\theta: \mathbb{R}^2 \to \mathbb{R}^2$.
    For Stable-FM we use a dense network $H_\theta: \mathbb{R}^3 \to \mathbb{R}_+$ s.t. $\mathbf{v}_\theta (\mathbf{z}, \tau) = -\nabla_{(\mathbf{z}, \tau)} H_\theta(\mathbf{z}, \tau)$.
    For both networks, we use $4$ hidden layers, each with $500$ nodes and softplus activation (also on the output for Stable-FM for positivity).
    For all figures shown, we train with the Adam optimizer with weight decay regularization for $20000$ iterations and a batch size of $10000$ with a learning rate of $0.001$ and all other parameters as default.
    For all experiments, we use $L_\text{Auto}'$.

\begin{figure}[ht!]
    \centering
        ~~~~\begin{tabular}{c@{\hspace{1.2cm}}c@{\hspace{1.2cm}}c@{\hspace{1.2cm}}c@{\hspace{1.2cm}}c@{\hspace{1.2cm}}c@{\hspace{1.2cm}}c}
        $t=0.0$ & $t=0.25$ & $t=0.5$ & $t=0.75$ & $t=1.0$ & $t=1.25$ & $t=1.5$ \\
    \end{tabular}\break
    \raisebox{0.2\height}{\rotatebox{90}{\textbf{Stable-FM}}}
    \includegraphics[width=0.95\linewidth]{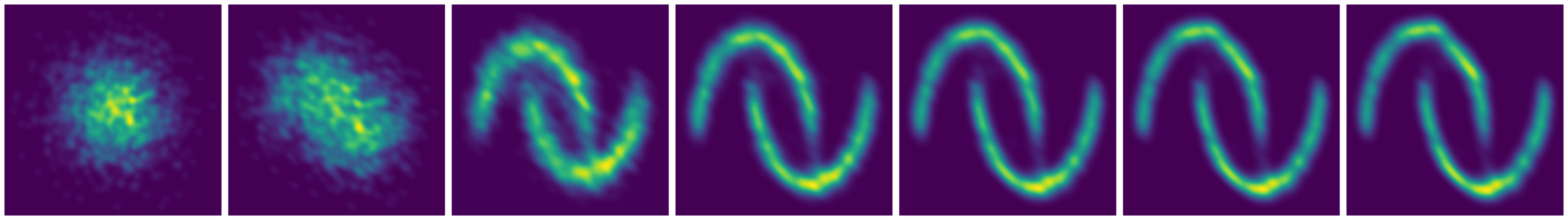} \\
    \raisebox{0.5\height}{\rotatebox{90}{\textbf{OT-FM}}}
    \includegraphics[width=0.95\linewidth]{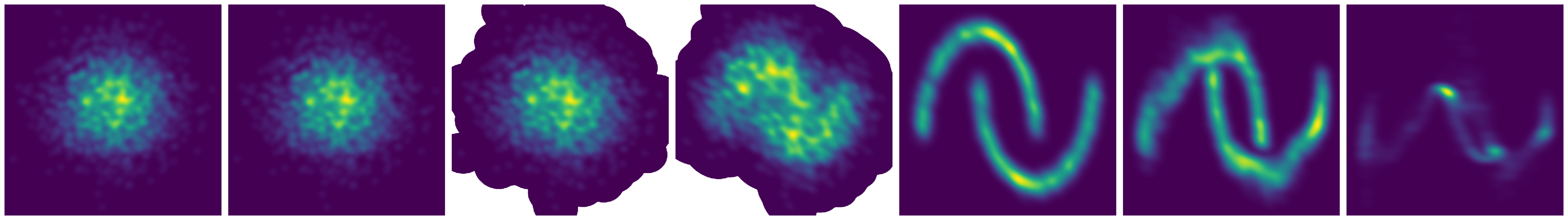}
    \caption{
        A longer time-frame depiction of OT-FM and Stable-FM flows in \cref{fig:dist-front}.
    }
    \label{fig:moons_dist_long}
\end{figure}

\begin{figure}[!ht]
    \centering
    \includegraphics[width=0.95\linewidth]{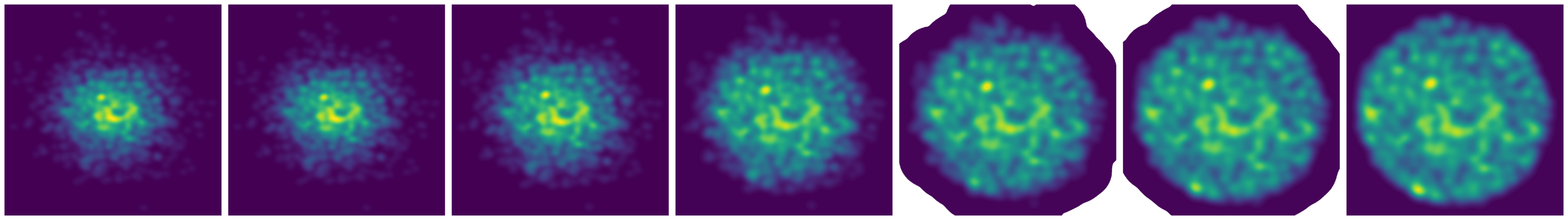}
    \includegraphics[width=0.95\linewidth]{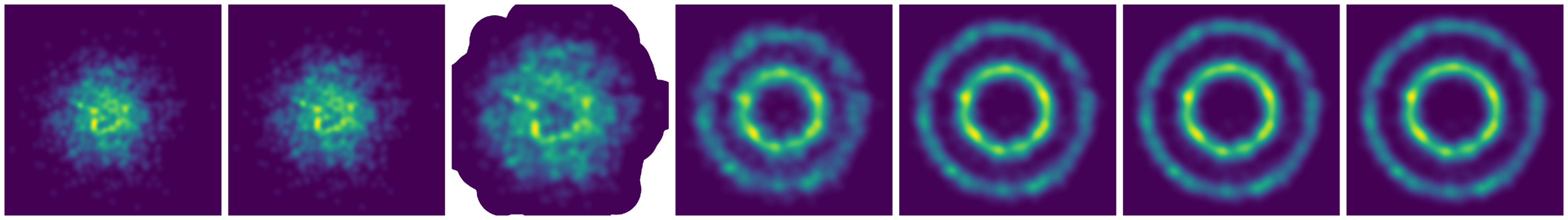}
    \includegraphics[width=0.95\linewidth]{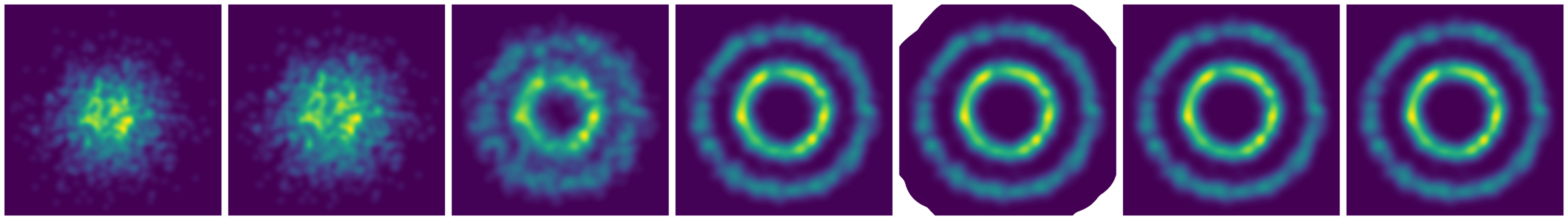}
    \includegraphics[width=0.95\linewidth]{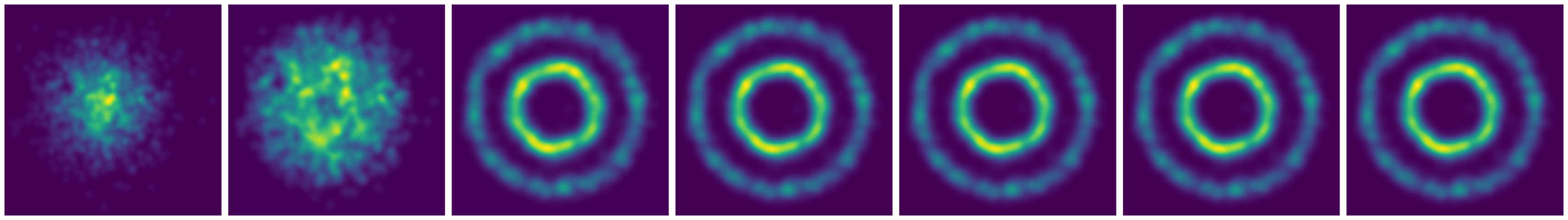}
    \includegraphics[width=0.95\linewidth]{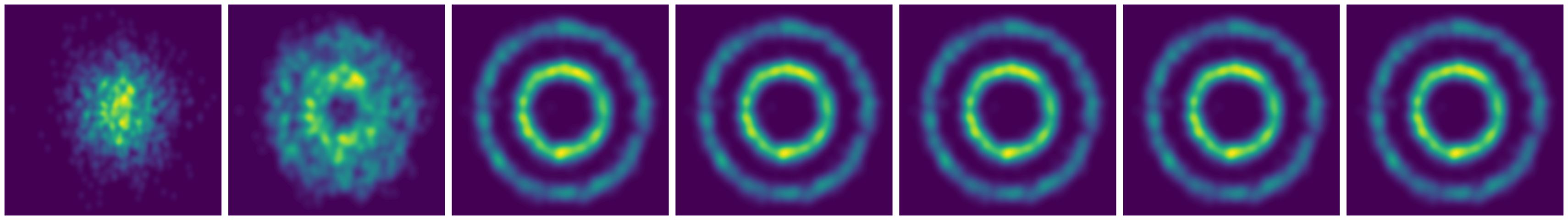}
    \includegraphics[width=0.95\linewidth]{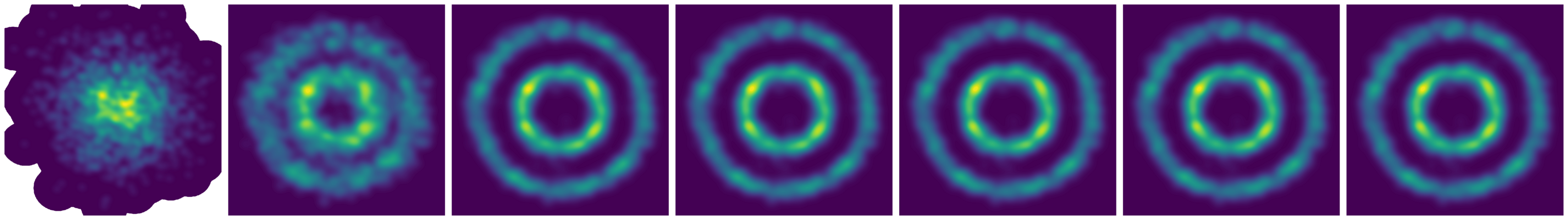}
    \caption{
        Flows of Stable-FM models trained with $L_\text{Auto}'$ from the standard normal distribution to the circles dataset with $\lambda_\tau = \ln(0.1)$ and 
        $\frac{\lambda_\mathbf{z}}{\lambda_\tau} \in \{1, 1.5, 2, 3, 3.5, 4\}$ (from top to bottom).
        Compare these values of $\frac{\lambda_\mathbf{z}}{\lambda_\tau}$ to those in \cref{fig:interpolation}.
        Each model has $4$ hidden layers with $500$ nodes and a softplus activation, including on the output (to enforce positive outputs).
    }
    \label{fig:stable-circles-dist}
\end{figure}

\begin{figure}[!ht]
    \centering
    \includegraphics[width=0.95\linewidth]{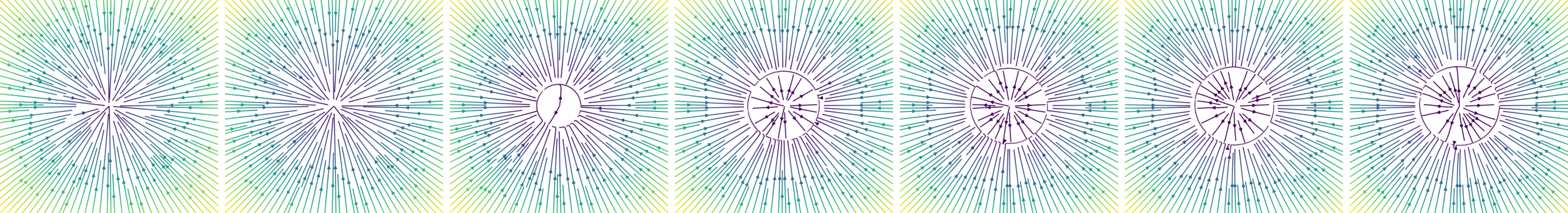}
    \includegraphics[width=0.95\linewidth]{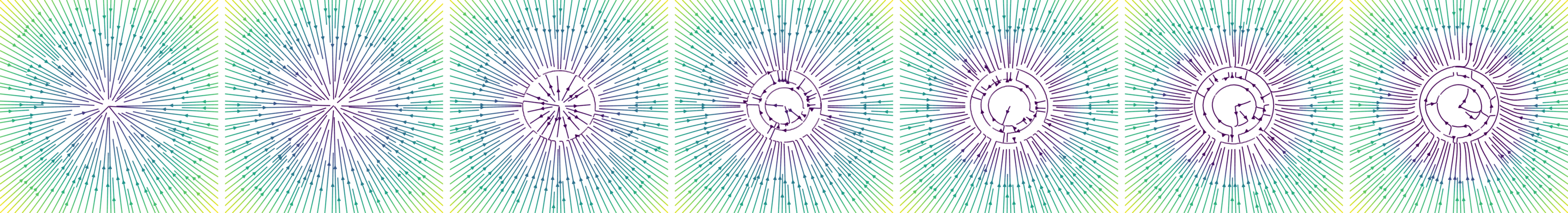}
    \includegraphics[width=0.95\linewidth]{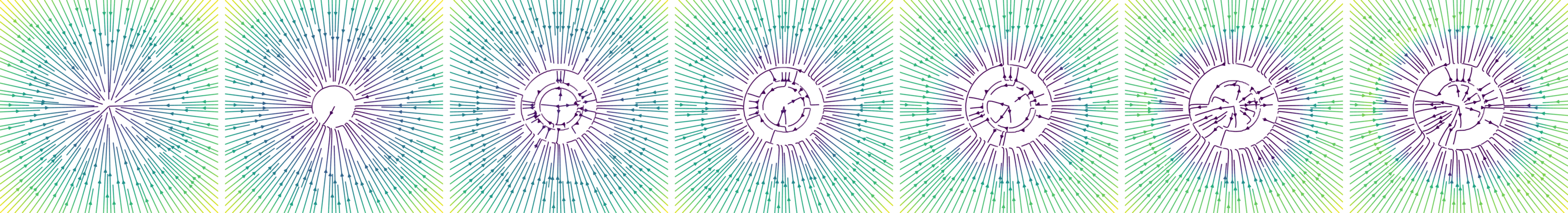}
    \includegraphics[width=0.95\linewidth]{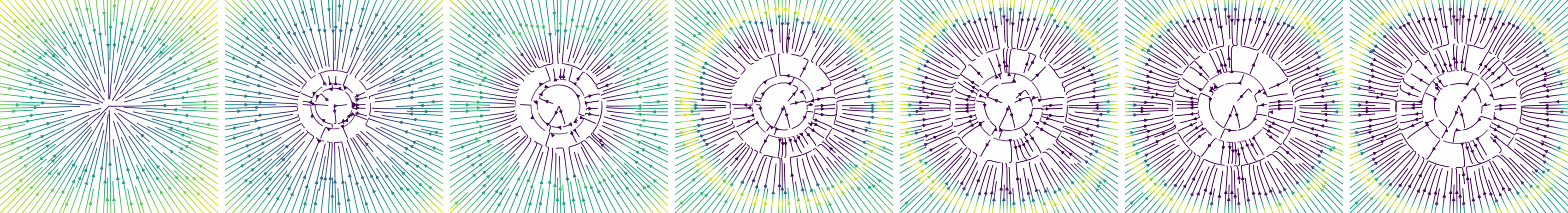}
    \includegraphics[width=0.95\linewidth]{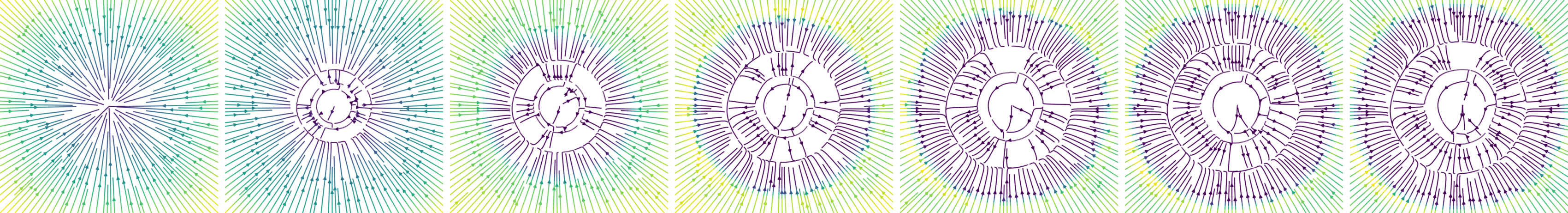}
    \caption{The vector fields corresponding to the flows in \cref{fig:stable-circles-dist}}
    \label{fig:stable-circles-vecs}
\end{figure}
\end{document}